\newcommand{\bS}{\mathbf{S}}\newcommand{\bB}{\mathbf{B}}\newcommand{\bM}{\mathbf{M}}
\newcommand{\bF}{\mathbf{F}}\newcommand{\bG}{\mathbf{G}}\newcommand{\bX}{\mathbf{X}}
\newcommand{\bT}{\mathbf{T}}\newcommand{\bR}{\mathbf{R}}\newcommand{\bY}{\mathbf{Y}}\newcommand{\bU}{\mathbf{U}}
\newcommand{\bA}{\mathbf{A}}
\newcommand{\btV}{\widetilde{\mathbf{V}}}
\newcommand{\bI}{\mathbf{I}}\newcommand{\bE}{\mathbf{E}}
\newcommand{\bbE}{\mathbb{E}}\newcommand{\bbI}{\mathbb{I}}\newcommand{\bW}{\mathbf{W}}\newcommand{\bH}{\mathbf{H}}
\newcommand{\cC}{\mathcal{C}}\newcommand{\cF}{\mathcal{F}}\newcommand{\cL}{\mathcal{L}}\newcommand{\bLmd}{\mathbf{\Lambda}}
\newcommand{\bV}{\mathbf{V}}
\newcommand{\bo}{\mathbf{0}}
\newcommand{\bx}{\mathbf{x}}\newcommand{\bm}{\mathbf{m}}
\newcommand{\by}{\mathbf{y}}
\newcommand{\bv}{\mathbf{v}}
 \newcommand{\bPi}{\boldsymbol{\Pi}}\newcommand{\bGam}{\boldsymbol{\Gamma}}
\newcommand{\bpi}{\boldsymbol{\pi}}
\newcommand\refe[1]{(\ref{#1})}\newcommand\refp[1]{Proposition~\ref{#1}}
\newcommand\reff[1]{Fig.~\ref{#1}}\newcommand\reft[1]{Tab.~\ref{#1}}\newcommand\refs[1]{Sec.~\ref{#1}}
\newcommand\refa[1]{Algorithm~\ref{#1}}  
\theoremstyle{plain} \newtheorem{prop}{Proposition} %\linespread{1.6}
\def\R{{\mathbb R}}
\def\tr{\mbox{tr}}
\def\vc{\mbox{vec}}
\def\diag{\mbox{diag}}
\def\rank{\mbox{rank}}
\def\bdiag{\mbox{blkdiag}}
\def\etal{{\em et al.\/}\,}
\def\s2{\sigma^2}
\mathchardef\mh="2D
\definecolor{purple}{RGB}{148,0,211}
\begin{document}

\begin{frontmatter}
\title{Regularized Bilinear Discriminant Analysis for Multivariate Time Series Data}
\author[J. Zhao]{Jianhua Zhao\corref{cor1}} \ead{jhzhao.ynu@gmail.com}
\cortext[cor1]{Corresponding author at: School of Statistics and Mathematics, Yunnan University of Finance and Economics, China.}
%\author{Feng Shun} \ead{fsun.ynufe@qq.com}
%	 \author[Jianhua]{Xuan Ma}\ead{ma\_xuan96@qq.com}	
\author[J. Zhao]{Haiye Liang} \ead{hyliang0209@qq.com}
\author[S. Li]{Shulan Li} \ead{shulanli0526@qq.com}
%\author{Chen Shen} \ead{hyliang0209@qq.com}
\author[J. Zhao]{Zhiji Yang} \ead{yangzhiji@ynufe.edu.cn}
\author[Z. Wang]{Zhen Wang} \ead{zhenwang0@gmail.com} % \corref{cor1}
\address[J. Zhao]{School of Statistics and Mathematics, Yunnan University of Finance and Economics, Kunming, 650221, China.} %\maketitle
\address[S. Li]{School of Accounting, Yunnan University of Finance and Economics, Kunming, 650221, China.}
\address[Z. Wang]{The Center for OPTical IMagery Analysis and Learning (OPTIMAL) and School of Mechanical Engineering, Northwestern Polytechnical University, Xi’an 710072, China}
%	\author[Jianhua]{Jianhua Zhao \corref{cor1}}\ead{jhzhao.ynu@gmail.com}
%	\cortext[cor1]{The corresponding author. Tel: +86 0871 65113808; fax: +86 0871 65113808.}
%	\author[Jianhua]{Feng Shun} \ead{fsun.ynufe@qq.com}
%	%	 \author[Jianhua]{Xuan Ma}\ead{ma\_xuan96@qq.com}	
%	\author[Jianhua]{Haiye Liang} \ead{hyliang0209@qq.com}

%	 \address[Philip]{Department of
	%	Statistics and Actuarial Science, The University of Hong Kong}
%	 \address[Shulan]{School of Accountancy, Yunnan University of Finance and Economics, Kunming, 650221, China.}
\begin{abstract}
	In recent years, the methods on matrix-based or bilinear discriminant analysis (BLDA) have received much attention. Despite their advantages, it has been reported that the traditional vector-based regularized LDA (RLDA) is still quite competitive and could outperform BLDA on some benchmark datasets. Nevertheless, it is also noted that this finding is mainly limited to image data. In this paper, we propose regularized BLDA (RBLDA) and further explore the comparison between RLDA and RBLDA on another type of matrix data, namely multivariate time series (MTS). Unlike image data, MTS typically consists of multiple variables measured at different time points. Although many methods for MTS data classification exist within the literature, there is relatively little work in exploring the matrix data structure of MTS data. Moreover, the existing BLDA can not be performed when one of its within-class matrices is singular. To address the two problems, we propose RBLDA for MTS data classification, where each of the two within-class matrices is regularized via one parameter. We develop an efficient implementation of RBLDA and an efficient model selection algorithm with which the cross validation procedure for RBLDA can be performed efficiently. Experiments on a number of real MTS data sets are conducted to evaluate the proposed algorithm and compare RBLDA with several closely related methods, including RLDA and BLDA. The results reveal that RBLDA achieves the best overall recognition performance and the proposed model selection algorithm is efficient; Moreover, RBLDA can produce better visualization of MTS data than RLDA. 
	%visualization for matrix data is also performed between RLDA and RBLDA and the result shows that 
	
	%In this paper we study bilinear discriminant analysis (BLDA) for MTS data classification that utilizes the matrix data structure. It is found that 
	%To alleviate the high-dimensional problems suffered by vector-based linear discriminant analysis (LDA),
	%The observations in multivariate time series (MTS) data are naturally in matrix form,  		
	\begin{keyword} Discriminant analysis, Classification, Matrix data, Multivariate time series, Regularization, Visualization.
	\end{keyword}
\end{abstract}
\end{frontmatter}	
%, which performs discriminant analysis on matrix-valued observations directly   on a number of real image datasets.
\section{Introduction}
\label{sec:intro}

%\begin{enumerate}
%\item MTS data is common.
%\item MTS data classification is important. Many methods fail to consider 2D structure.
%\item 2DSVD consider but fails consider label information.
%\item We study BLDA for MTS data classification. Perform RLDA efficiently. Perform CV efficiently.
%\end{enumerate}
Principal component analysis (PCA) and Fisher linear discriminant analysis (LDA) are widely used dimension reduction methods for vector data, where observations are vectors. To perform dimension reduction on matrix data, where observations are matrices, one could consider first vectorizing the matrix data and then applying PCA and LDA or their variants to the resulting vector data. However, the vectorization destroys the natural matrix data structure and may lose potentially useful information between rows or columns \cite{jpye-glram}. Furthermore, the vectorized data is often very high-dimensional, on which the vector-based methods may suffer from the so-called curse of dimensionality \cite{zhao2012-bppca}.
%useful variable-based and time-based informations

To deal with the high-dimensional problem for image data, instead of employing vectorization, several matrix-based dimension reduction methods have been proposed, such as bilinear PCA (BPCA) \citep{zhangdq-2dpca}, bilinear discriminant analysis (BLDA) \cite{Noushath-2dlda,zhao2012-slda}, etc. The common feature is that these methods use bilinear transformations, rather than the linear ones in PCA and LDA. As a result, their numbers of parameters are much less than those in PCA and LDA and can alleviate greatly the high-dimensional problem and significantly reduce computational costs \citep{jpye-glram}.
%two-dimensional PCA (2DPCA) \citep{yang2004two}, two-dimensional LDA (2DPCA) \citep{xiong-2dlda},and received much attention in the literature
%, For classification of another kind of matrix data
In essence, the matrix-based methods such as BPCA and BLDA \cite{Noushath-2dlda,zhao2012-slda} can be viewed as a special case of the vector-based methods PCA and LDA under the assumption that the transformation matrix is separable \cite{zhao2012-bppca,zhao2012-slda}. When this assumption tends to hold approximately, it would be expected that matrix-based methods can perform favorably in finite matrix data applications since they simply require specifying a much smaller number of parameters. For PCA-related methods, even if the assumption fails to hold true, it is shown in \cite{zhao2012-bppca} that the matrix-based probabilistic BPCA could obtain better performance than vector-based probabilistic PCA on datasets with small sample sizes due to the variance-bias trade-off. However, this superiority does not always hold true for LDA-related methods. The comprehensive study in \citep{zheng-2dlda} shows that regularized LDA (RLDA) outperforms BLDA on a number of real image datasets. Similar results can be found in \citep{zhao2015-rlda-2stage}.

Different from \citep{zheng-2dlda,zhao2015-rlda-2stage}, where the comparisons are mainly limited to image data, this paper further explores the comparison between vector-based LDA and matrix-based BLDA on another type of matrix data, namely multivariate time series (MTS), whose observation is usually the result of multiple variables measured at different time points, and can be naturally represented by a matrix. MTS data classification is a hot research topic in time series data mining. A good summary on the development in this field can be found in \citep{handhika2019multivariate}. Many methods have been proposed in the literature, including discriminant analysis \cite{maharaj2014discriminant}, hypothesis testing \citep{maharaj1999comparison}, singular value decomposition (SVD) \citep{li2006real}, common PCA \citep{li2016accurate}, dynamic time warping (DTW) \citep{holt-mdtw2007,ruiz2020great}, Mahalanobis distance-based DTW \citep{mei2015learning}, etc. However, all of these methods fail to consider the matrix data structure inherent in MTS data. Therefore, there is a gap between the active area on matrix-based methods and MTS data classification.

Unlike the matrix data of images, where both the row and column are variables (i.e., pixels), the salient characteristic of MTS data is that the row and column are different: one is variable and the other is time. Intuitionally, it seems that the assumption of a separable transformation matrix in BPCA and BLDA is more suitable for such data. In fact, BPCA has been suggested in \cite{zhangdq-2dpca} and the experiments show that BPCA is often advantageous over the closely related methods. Since BPCA is an unsupervised method, BLDA is further suggested in \citep{zhao2021-pblda} to utilize the matrix data structure and label information simultaneously. The results show that the use of label information is generally beneficial to the classification of MTS data. Since the existing implementations of BLDA \cite{Noushath-2dlda,inoue-2dlda} suffers from two problems: (i) BLDA can not be performed when one of its within-class matrices is singular; (ii) the computational cost could be very heavy when the number of variables or time points is high, a variant of BLDA based on pseudo-inverse (PBLDA) is also proposed in \citep{zhao2021-pblda}.% to address the problem (i).
%the existing implementations of BLDA \cite{Noushath-2dlda,inoue-2dlda} can not be performed when one of its within-class matrices is singular, 

%Wen and Shen \cite{weng2008-2dsvd} suggest using  . One attraction is that BPCA  of MTS data. in terms of classification performance  to explicitly explore the matrix data structure for MTS data classification. where the label information is not used for reducing dimensionality,

As detailed in \citep{friedman-rda}, pseudo-inverse may produce biased estimates of the eigenvalues of the within-class matrix; the smallest ones are biased towards values that are too low, which may cause the interesting discriminant directions to be fooled by the subspace associated with the low eigenvalues of the within-class matrix. To mitigate this problem, RLDA \citep{friedman-rda} has been proven effective  \citep{jpye-rlda,zhzhang-rlda}, which stabilizes the within-class matrix by adding a small multiple of the identity matrix. In this paper, to address problem (i) we extend RLDA to matrix data and propose regularized BLDA (RBLDA) for MTS data classification, where each of the two within-class matrices is regularized via one parameter. To address problem (ii), we develop an efficient implementation of RBLDA. In addition, to choose the two regularization parameters, we develop an efficient model selection algorithm so that the cross validation procedure for RBLDA can be performed efficiently. With the proposed RBLDA, we conduct a comprehensive comparison between vector-based RLDA and matrix-based RBLDA on MTS data.

%One effective approach to mitigate this problem is to apply regularization to correct eigenvalue distortion, as done in RLDA \citep{friedman-rda}. motivated by RLDA, 
%this bias may exaggerate the importance the subspace associated with low eigenvalues and yield biased results ., as done in RLDA \citep{friedman-rda} the problem that the computational cost could be very heavy when one data dimensionality $d_1$ or $d_2$ is high

%In this paper, we study bilinear discriminant analysis (BLDA) for MTS data classification that utilizes the matrix data structure and label information simultaneously. It is found that the existing implementations of BLDA \cite{Noushath-2dlda,inoue-2dlda} suffers from two drawbacks: (i) BLDA can not be performed when one of its within-class matrices is singular; (ii) the computational cost could be very heavy when one data dimensionality $d_1$ or $d_2$ is high. that is, that largest ones are biased upward and

The remainder of the paper is organized as follows. \refs{sec:rev} gives a brief review of linear discriminant analysis (LDA), regularized LDA (RLDA), and the existing implementation of BLDA. \refs{sec:rblda} proposes our RBLDA and its efficient model selection algorithm. 
\refs{sec:expr} constructs an empirical study to evaluate the proposed algorithm and compare RBLDA with several closely related competitors. We end the paper with some conclusions and discussions in \refs{sec:con}.

Notations: in the sequel, the transpose of vector/matrix is denoted by the superscript
$'$, and the $d\times d$ identity matrix by $\bI_d$. Moreover,
$\tr(\bA)$ is the trace of matrix $\bA$ and $\bB=\bdiag{(\bA_1,\dots,\bA_m)}$ stands for the block diagonal matrix formed by aligning the input matrices $\bA_1,\dots,\bA_m$ along the diagonal of $\bB$.

\section{Review of LDA and Related Variants}\label{sec:rev}

\subsection{Linear discriminant analysis (LDA)}\label{sec:lda}
Given a set of $d$-dimensional vector-valued data $\{\bx_{i}\}_{i=1}^n$ consisting of $c$ classes, let $\bm_k=\frac1{n_k}\sum\nolimits_{i\in\cC_k}\bx_i$ be the sample mean of class $k$, $\cC_k$, and $\bm=\frac1n\sum\nolimits_{i=1}^n\bx_i$ be the sample mean, then the between-class, within-class and total scatter matrices are given by
\begin{IEEEeqnarray}{rCl}
\bS_b&=&\frac1n\sum\nolimits_{k=1}^cn_k(\bm_k-\bm)(\bm_k-\bm)',\label{eqn:lda.Sb}\\
\bS_w&=&\frac1n\sum\nolimits_{k=1}^c\sum\nolimits_{i\in\cC_k}(\bx_i-\bm_k)(\bx_i-\bm_k)',\nonumber\\ %\label{eqn:lda.Sw}
\bS_t&=&\frac1n\sum\nolimits_{i=1}^n(\bx_i-\bm)(\bx_i-\bm)'.\label{eqn:lda.St}
\end{IEEEeqnarray} 
Consider a linear transformation $\by=\bV_w'\bx$, where $\bV_w\in\R^{d\times q}$ and $q<d$. The within-class and between-class scatter matrices in $\by$-space are $\bV_w'\bS_w\bV_w$ and $\bV_w'\bS_b\bV_w$, respectively. The Fisher criterion aims to find $\bV_w$ that maximizes $\cF$ as \cite{fukunaga-1990},
\begin{equation*}
\cF=\mathop{\hbox{max}}\nolimits_{\bV_w}\hbox{tr}\left\{(\bV_w'\bS_w\bV_w)^{-1}(\bV_w'\bS_b\bV_w)\right\}.\label{eqn:lda.obj}
\end{equation*}
It can be seen that $\bV_w$ can only be determined up to a $q\times q$ nonsingular transformation. A common solution is obtained by solving the problem
\begin{IEEEeqnarray}{rCl}
\mathop{\hbox{max}}\nolimits_{\bV_w}\tr\left\{\bV_w'\bS_b\bV_w\right\},\quad s.t.\,\, \bV_w'\bS_w\bV_w=\bI_q,\label{eqn:lda.obj.w}
\end{IEEEeqnarray}
and the columns of $\bV_w$ are $\bS_w$-orthogonal due to the constraint $\bV_w'\bS_w\bV_w=\bI_q$.

Alternatively, $\bS_w$ in \refe{eqn:lda.obj.w} can be replaced by $\bS_t$ \cite{zhzhang-rlda,jpye-rlda} and the optimization problem is 
\begin{IEEEeqnarray}{rCl}
\mathop{\hbox{max}}\nolimits_{\bV_t}\tr\{\bV_t'\bS_b\bV_t\},\quad s.t.\,\, \bV_t'\bS_t\bV_t=\bI_q,\label{eqn:lda.obj.t}
\end{IEEEeqnarray}
and then the columns of $\bV_t$ are $\bS_t$-orthogonal due to the constraint $\bV_t'\bS_t\bV_t=\bI_q$.
Note that the above two problems have close relationship as detailed in \refp{prop:prop.eqv}, e.g., $\bV_w$ can be obtained via $\bV_t$ by \refe{eqn:rlda.Vw}.

\subsection{Regularized LDA (RLDA)}\label{sec:rlda}
For high dimensional data, where the data dimension $d$ is greater than the sample size $n$, $\bS_t$ or $\bS_w$ is singular and hence LDA can not be performed. In addition, for the data whose variables are highly correlated even if $d$ is less than $n$, $\bS_t$ or $\bS_w$ could be approximately singular and the performance of LDA could be degenerated greatly. The regularized LDA (RLDA) proposed in \cite{friedman-rda} is a popular method to tackle these two problems simultaneously. The corresponding optimization problem to \refe{eqn:lda.obj.t} in RLDA is 
\begin{IEEEeqnarray}{rCl}
\mathop{\hbox{max}}\nolimits_{\bV_t}\tr\{\bV_t'\bS_b\bV_t\},\quad s.t.\,\, \bV_t'\bS_t^r\bV_t=\bI_q.\label{eqn:rlda.obj.t}
\end{IEEEeqnarray}
where $\bS_t^r=(1-r)\bS_t+r\hat{\sigma}^2\bI_q$, $\hat{\sigma}^2=\tr(\bS_t)/d$ and the regularization parameter $r\in(0, 1]$. Clearly, $\bS_t^r$ is shrunk to a scalar covariance. The parameter $r$ is usually determined via cross validation. 

By the method of Lagrange Multipliers, the closed form solution $\bV_t$ can be obtained by solving the generalized eigenvalue problem
\begin{equation}
{\bS_t^r}^{-1}\bS_b\bV_t=\bV_t\bLmd_t,\label{eqn:rlda.sol1}
\end{equation}
under the constraint $\bV_t'\bS_t^r\bV_t=\bI_q$. Similarly, the corresponding optimization problem to \refe{eqn:lda.obj.w}
\begin{IEEEeqnarray}{rCl}
\mathop{\hbox{max}}\nolimits_{\bV_w}\tr\{\bV_w'\bS_b\bV_w\},\quad s.t.\,\, \bV_w'\bS_w^r\bV_w=\bI_q.\label{eqn:rlda.obj.w}
\end{IEEEeqnarray}
where $\bS_w^r=(1-r)\bS_w+r\hat{\sigma}^2\bI_q$. In addition, we have 
\begin{equation}
\bS_t^r=\bS_w^r+(1-r)\bS_b.\label{eqn:rlda.StSw}
\end{equation}
Next we analyze the main time complexity of RLDA implemented by \refe{eqn:rlda.sol1}. The formation of $\bS_t^r$ costs $O(d^2n)$ and its inverse takes $O(d^3)$. The eigenproblem \refe{eqn:rlda.sol1} also costs $O(d^3)$. Thus the total cost is $O(d^2n)+O(d^3)$. When $d\gg n$, this cost could be very heavy. In \refs{sec:rlda.eff}, we review a computationally efficient implementation of RLDA.

%BEGIN_FOLD
%\begin{IEEEeqnarray}{rCl}
%\bS_t^r=(1-r)\bS_t+r\hat{\sigma}^2\bI,\label{rlda.St.regu}
%\end{IEEEeqnarray}
%\begin{IEEEeqnarray}{rCl}
%,\label{rlda.Sw.regu}
%\end{IEEEeqnarray}
%\begin{IEEEeqnarray}{rCl}
%	\bGam_w=\mathop{\hbox{max}}\nolimits_{\bV_w}\bV_w'\bS_b\bV_w,\quad s.t.\,\, \bV_w'\bS_w^r\bV_w=\bI.\label{eqn:rlda.obj.2}
%\end{IEEEeqnarray}
%It regularizes $\bS_t$ in \refe{eqn:lda.St} as
%shrinking to a scalar covariance, works by

%Recall that ridge regression is a well-known method to deal with the correlation among variables. 
%The relationship between RLDA and ridge regression has been established in \cite{hastie-PDA} and \cite{zhzhang-rlda}, which clarifies why RLDA is effective for image data.
%\begin{figure}[!t]
%\centering \includegraphics*[111, 421][290, 559]{non_cov.eps}%\scalebox{1.0}[0.9]{ graph/
%\caption{Error rate versus number of iterations using 2DLDA with $q_c=q_r=10$ on ORL data.} \label{fig:non_cov}
%\end{figure}
%END_FOLD

\subsection{The relationship between the two problems}
\begin{prop}\label{prop:prop.eqv}
	Let $\bV_t$ be the solution of the RLDA problem \refe{eqn:rlda.obj.t} and $\bLmd_t=\bV_t'\bS_b\bV_t$. Then the solution $\bV_w$ of RLDA problem \refe{eqn:rlda.obj.w} and $\bLmd_w=\bV_w'\bS_b\bV_w$ are given by 
	\begin{IEEEeqnarray}{rCl}
		\bV_w&=&\bV_t(\bI_q-(1-r)\bLmd_t)^{-1/2},\label{eqn:rlda.Vw}\\
		\bLmd_w&=&\bLmd_t(\bI_q-(1-r)\bLmd_t)^{-1}.\nonumber %\label{eqn:rlda.Lmdw}
	\end{IEEEeqnarray}
\end{prop}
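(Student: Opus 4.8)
The plan is to treat the stated $\bV_w,\bLmd_w$ as an ansatz: first check that $\bV_w$ is feasible for \refe{eqn:rlda.obj.w} and diagonalizes the associated generalized eigenproblem with eigenvalue matrix $\bLmd_w$, and then check that this ansatz actually attains the maximum. The only inputs I need are the characterization of the solution $\bV_t$ of \refe{eqn:rlda.obj.t} --- the constraint $\bV_t'\bS_t^r\bV_t=\bI_q$ together with the eigenequation $\bS_b\bV_t=\bS_t^r\bV_t\bLmd_t$ from \refe{eqn:rlda.sol1}, with $\bLmd_t=\bV_t'\bS_b\bV_t$ diagonal --- and the identity \refe{eqn:rlda.StSw}. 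I will also use that $\bS_w^r=(1-r)\bS_w+r\hat{\sigma}^2\bI\succ0$ for $r\in(0,1]$ and that $\bV_t$ has full column rank since $\bS_t^r\succ0$.

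As a preliminary I would establish $\bI_q-(1-r)\bLmd_t\succ0$, which makes $\bM:=(\bI_q-(1-r)\bLmd_t)^{-1/2}$ a well-defined real diagonal matrix commuting with $\bLmd_t$. Sandwiching \refe{eqn:rlda.StSw} between $\bV_t'$ and $\bV_t$ and using $\bV_t'\bS_t^r\bV_t=\bI_q$, $\bV_t'\bS_b\bV_t=\bLmd_t$ gives $\bV_t'\bS_w^r\bV_t=\bI_q-(1-r)\bLmd_t$; the left side is positive definite because $\bS_w^r\succ0$ and $\bV_t$ has full column rank, so the right side is too.

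Next comes the algebraic verification for $\bV_w=\bV_t\bM$. Feasibility is immediate: $\bV_w'\bS_w^r\bV_w=\bM(\bV_t'\bS_w^r\bV_t)\bM=\bM(\bI_q-(1-r)\bLmd_t)\bM=\bI_q$, and $\bV_w'\bS_b\bV_w=\bM\bLmd_t\bM=\bLmd_t(\bI_q-(1-r)\bLmd_t)^{-1}$, which is the asserted $\bLmd_w$. To see that $\bV_w$ solves ${\bS_w^r}^{-1}\bS_b\bV_w=\bV_w\bLmd_w$, I would substitute $\bS_b\bV_t=\bS_t^r\bV_t\bLmd_t$ and $\bS_w^r=\bS_t^r-(1-r)\bS_b$, use that $\bM,\bLmd_t,\bLmd_w$ are diagonal hence commute, and cancel the common full-column-rank factor $\bS_t^r\bV_t$ and the invertible diagonal factor $\bM$; what remains is the scalar identity $\bLmd_t=(\bI_q-(1-r)\bLmd_t)\bLmd_w$, which holds by the definition of $\bLmd_w$. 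Rewriting $\bV_w=\bV_t\bM$ is exactly \refe{eqn:rlda.Vw}.

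The last step, and the one I expect to be the crux, is optimality: I must show the columns of $\bV_w$ are the \emph{top} $q$ generalized eigenvectors of $(\bS_b,\bS_w^r)$, not merely some $q$ of them. By \refe{eqn:rlda.StSw}, any direction $\bv$ with $\bS_b\bv=\lambda\bS_t^r\bv$ satisfies $\bS_b\bv=\big(\lambda/(1-(1-r)\lambda)\big)\bS_w^r\bv$, so $(\bS_b,\bS_t^r)$ and $(\bS_b,\bS_w^r)$ have the same eigen-directions and their eigenvalues are related by $\phi(\lambda)=\lambda/(1-(1-r)\lambda)$. Every eigenvalue of $(\bS_b,\bS_t^r)$ lies in $[0,1/(1-r))$, since $\bS_b\succeq0$ and $\bS_t^r\succeq(1-r)\bS_b$ with the inequality strict in the relevant quadratic forms (again from \refe{eqn:rlda.StSw} and $\bS_w^r\succ0$); on that interval $\phi$ is strictly increasing, so the top-$q$ selection agrees for the two problems. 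Since $\bV_t$ collects the top $q$ eigen-directions for \refe{eqn:rlda.obj.t} with its $\bS_t^r$-normalization, $\bV_w=\bV_t\bM$ collects the top $q$ for \refe{eqn:rlda.obj.w} with the $\bS_w^r$-normalization $\bV_w'\bS_w^r\bV_w=\bI_q$, hence solves \refe{eqn:rlda.obj.w} with optimal value $\tr(\bV_w'\bS_b\bV_w)=\tr(\bLmd_w)$. The only delicate point is the monotonicity of $\phi$, for which the bound $\lambda<1/(1-r)$ is essential; everything else reduces to routine manipulation of \refe{eqn:rlda.sol1} and \refe{eqn:rlda.StSw}.
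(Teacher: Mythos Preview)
Your proof is correct and follows essentially the same route as the paper: both use \refe{eqn:rlda.StSw} to compute $\bV_t'\bS_w^r\bV_t=\bI_q-(1-r)\bLmd_t$, then define $\bV_w=\bV_t(\bI_q-(1-r)\bLmd_t)^{-1/2}$ and verify the constraint and the formula for $\bLmd_w$. Your version is more careful---you justify that $\bI_q-(1-r)\bLmd_t\succ0$ so the square root exists, and you supply the optimality argument via the monotone map $\phi(\lambda)=\lambda/(1-(1-r)\lambda)$---whereas the paper's proof stops after the feasibility and $\bLmd_w$ computations and leaves these points implicit.
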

\begin{proof}
	Substituting \refe{eqn:rlda.StSw} into $\bV_t'\bS_t^r\bV_t$ and noting $\bV_t'\bS_b\bV_t=\bLmd_t$, we have $\bV_t'\bS_w^r\bV_t=\bI_q-(1-r)\bLmd_t$. Define $\bV_w$ as in \refe{eqn:rlda.Vw}, we obtain $\bV_w'\bS_w^r\bV_w=\bI_q$. Furthermore, using \refe{eqn:rlda.Vw}, we obtain $\bLmd_w=\bV_w'\bS_b\bV_w=\bLmd_t(\bI_q-(1-r)\bLmd_t)^{-1}$. This completes the proof.
\end{proof}
\refp{prop:prop.eqv} shows the two problems are equivalent in terms of Fisher criterion $\cF$. However, the solutions $\bV_w$ and $\bV_t$ are different due to the diagonal transformation $(\bI_q-(1-r)\bLmd_t)^{-1/2}$ in \refe{eqn:rlda.Vw}. The subsequent classifiers such as nearest neighbors using $\bV_w$ and $\bV_t$ may yield different classification performances. 

\subsection{Efficient implementation of RLDA}\label{sec:rlda.eff}
Zhang \etal \cite{zhzhang-rlda} present an efficient implementation of RLDA problem \refe{eqn:rlda.obj.t}, which is briefly reviewed below. 

Let $\mathbf{1}_d$ stand for the $d\times 1$ vector of ones, and $\bE=(e_{ij})$ be a $n\times c$ indicator matrix with $e_{ij}=1$ if $\bx_i$ belonging to class $j$ and $e_{ij}=0$ otherwise. The $n\times n$ centering matrix $\bH=\bI_n-\frac1n\mathbf{1}_n\mathbf{1}'_n$.

Moreover, denote $\bX=[\bx_1,\dots,\bx_n]\in\R^{d\times n}$, $\bM=[\bm_1,\dots,\bm_c]\in\R^{d\times c}$, $\bPi=\diag{(n_1,\dots,n_c)}\in\R^{c\times c}$, $\bPi^{1/2}=\diag{(\sqrt{n}_1,\dots,\sqrt{n}_c)}$, $\bpi=(n_1,\dots,n_c)'\in\R^{c\times 1}$, $\sqrt{\bpi}=(\sqrt{n}_1,\dots,\sqrt{n}_c)'$. 

Without loss of generality, we assume that the data has been centered, i.e., $\bX=\bX\bH$. With the above notations, $\bS_b$ in \refe{eqn:lda.Sb} and $\bS_t$ in \refe{eqn:lda.St} can be rewritten as
\begin{IEEEeqnarray}{rCl}
	\bS_t&=&\bX\bX',\label{eqn:rlda.St}\\
	\bS_b&=&\bF_b\bF_b',\label{eqn:rlda.Sb}
	%\bS_b&=&\bX\bE\bPi^{-1}\bE'\bX',\label{eqn:rlda.Sb}
\end{IEEEeqnarray}
where $\bF_b=\bX\bE\bPi^{-1/2}$. Substituting \refe{eqn:rlda.St} and \refe{eqn:rlda.Sb} into \refe{eqn:rlda.sol1}, we obtain
\begin{equation*}
	\bG\bF_b'\bV=\bV\bLmd,\label{eqn:rlda.sol.re}
	%\bG\bPi^{-1/2}\bE'\bX'\bV=\bV\bLmd,\label{eqn:rlda.sol.re}
\end{equation*}
where 
\begin{equation}
	\bG=((1-r)\bX\bX'+r\hat{\sigma}^2\bI_d)^{-1}\bF_b.\label{eqn:rlda.G.1}
\end{equation}
Since $((1-r)\bX\bX'+r\hat{\sigma}^2\bI_d)^{-1}\bX=\bX((1-r)\bX'\bX+r\hat{\sigma}^2\bI_n)^{-1}$, we also have 
\begin{equation}
	\bG=\bX((1-r)\bX'\bX+r\hat{\sigma}^2\bI_n)^{-1}\bE\bPi^{-1/2}.\label{eqn:rlda.G.2}
\end{equation}
When $n<d$, it is cheaper to compute $\bG$ by \refe{eqn:rlda.G.2}. Let 
\begin{IEEEeqnarray}{rCl}
	\bR&=&\bF_b'\bG=\bF_b'((1-r)\bX\bX'+r\hat{\sigma}^2\bI_d)^{-1}\bF_b.\label{eqn:rlda.R}
\end{IEEEeqnarray}
Since the $c\times c$ matrix $\bR$ and $d\times d$ matrix $\bG\bF_b'$ have the same nonzero eigenvalues, if $(\bLmd,\bV_R)$ is the eigenpair of $\bR$, then we have that $(\bLmd,\bG\bV_R)$ is the eigenpair of $\bG\bF_b'$. The eigenpair of $\bR$ can be obtained by SVD, which is also equivalent to eigenvalue decomposition (EVD) since $\bR$ is a nonnegative definite matrix. For clarity, the algorithm for RLDA is summarized in \refa{alg:rlda}.
\begin{algorithm}[htbp]
	\caption{Efficient implementation of RLDA.}
	\label{alg:rlda}
	\begin{algorithmic}[1]
		\REQUIRE ($\bX,\bE,\bPi,r$).
		\STATE Compute $\bG$ by \refe{eqn:rlda.G.1} or \refe{eqn:rlda.G.2} and $\bR$ by \refe{eqn:rlda.R}.
		\STATE Perform the condensed SVD of $\bR$ as $\bR=\bV_R\bLmd\bV'_R$.		
		\ENSURE $\bV_w=\bG\bV_R[\bLmd(\bI_q-(1-r)\bLmd)]^{-1/2}$.
		%$\bV_t=\bG\bV_R\bLmd_R^{-1/2}$ or\\ \hskip2.3em
	\end{algorithmic}
\end{algorithm}

The complexity analysis of \refa{alg:rlda} is given below. Calculating $\bG$ costs $O(nda)+O(a^3)$, where $a=\min(n,d)$. Computing $\bR$ takes $O(dc^2)$, its eigen-decomposition $O(c^3)$ and $\bG\bV$ $O(dc^2)$. Thus the total cost is $O(nda)+O(dc^2)$. It can be seen that \refa{alg:rlda} is more efficient than the direct implementation in \refs{sec:rlda}, particularly when $d\gg n$. In \refs{sec:rblda}, we will extend this algorithm to our proposed regularized bilinear LDA.

Another efficient implementation of RLDA based on SVD is also given in \cite{jpye-rlda}. However, it has been shown in \cite{zhzhang-rlda} that \refa{alg:rlda} based on EVD is more efficient than that based on SVD in \cite{jpye-rlda}.

\subsection{Review of bilinear LDA (BLDA)}\label{sec:blda}
Instead of a linear transformation, bilinear LDA (BLDA) seeks for a bilinear transformation $\bY=\bV_1'\bX\bV_2$. The idea originates from bilinear PCA \cite{zhangdq-2dpca}. To find the column and row transformations, \cite{Noushath-2dlda,inoue-2dlda,zhao2012-slda} present a separate solution. A brief review of the implementation in \cite{inoue-2dlda} is given below. 
%bilinear SVD \cite{ding-2dsvd} and 

Given a set of $d_1\times d_2$-dimensional matrix-valued data $\{\bX_{1}, \bX_{2}, \dots, \bX_n\}$ consisting of $c$ classes, let  $\bW=\frac{1}{n}\sum_i\bX_i$ and $\bW_k=\frac{1}{n_k}\sum_{i\in\cL_k}\bX_i$ be the global sample mean and the sample mean of class $k$, respectively. The column-column and row-row within-class and between-class scatter matrices are defined as
\begin{IEEEeqnarray}{rCl}
	\bS_{1w}&=&\frac1{Nd_2}\sum\nolimits_k\sum\nolimits_{i\in\cL_k}(\bX_i-\bW_k)(\bX_i-\bW_k)',\nonumber\\%\label{eqn:blda.S1w} 
	\bS_{2w}&=&\frac1{Nd_1}\sum\nolimits_k\sum\nolimits_{i\in\cL_k}(\bX_i-\bW_k)'(\bX_i-\bW_k),\nonumber\\ %\label{eqn:blda.S2w}
	\bS_{1b}&=&\frac1{Nd_2}\sum\nolimits_kn_k(\bW_k-\bW)(\bW_k-\bW)', \label{eqn:blda.S1b} \\ \bS_{2b}&=&\frac1{Nd_1}\sum\nolimits_kn_k(\bW_k-\bW)'(\bW_k-\bW).\label{eqn:blda.S2b}
\end{IEEEeqnarray}
It can be easily verified that the ranks of these matrices satisfy $\rank(\bS_{1w})\leq\min(d_1,d_2(n-c))$, $\rank(\bS_{2w})=\min(d_2,d_1(n-c))$, $\rank(\bS_{1b})\leq\min{(d_1,d_2(c-1))}$ and $\rank(\bS_{2b})\leq\min$ $(d_1(c-1),d_2)$. 

The Fisher criterion for BLDA can be formulated as two separate Fisher sub-criterions \cite{zhao2012-slda}
\begin{IEEEeqnarray*}{rCl}
	\mathop{\hbox{arg\,max}}\nolimits_{\bV_{lw}}\hbox{tr}\left\{(\bV_{lw}'\bS_{lw}\bV_{lw})^{-1}(\bV_{lw}'\bS_{lb}\bV_{lw})\right\}, 
	%\hbox{and}\\  &&\mathop{\hbox{arg\,max}}\limits_{\bV_2}\hbox{tr}\left\{(\bV_2'\bS_{2w}\bV_2)^{-1}(\bV_2'\bS_{2b}\bV_2)\right\}.%\label{eqn:2dlda.obj.c}
\end{IEEEeqnarray*}
$l=1,2$, each of which is a similar optimization problem to that in LDA, with $l=1$ devoted to column direction and $l=2$ to row direction. The solution is obtained by solving the two problems separately
\begin{IEEEeqnarray}{rCl}
	\hskip-1em\mathop{\hbox{max}}\nolimits_{\bV_{lw}}\tr\left\{\bV_{lw}'\bS_{lb}\bV_{lw}\right\},\, s.t.\, \bV_{lw}'\bS_{lw}\bV_{lw}=\bI,\label{eqn:blda.obj.w}
\end{IEEEeqnarray}
where $l=1,2$. Assume that $\bS_{lw}$ is invertible and let the EVD of $\bS_{lw}$ be 
\begin{equation*}
	\bS_{lw}=\bU_{lw}\bGam_{lw}\bU'_{lw}.\label{eqn:blda.Sw.SVD}
\end{equation*}
Denote $\btV_{lw}=\bU_{lw}\bGam_{lw}^{1/2}\bV_{lw}$. Substitute $\bV_{lw}=\bU_{lw}\bGam_{lw}^{-1/2}\btV_{lw}$ into \refe{eqn:blda.obj.w}, yielding
\begin{IEEEeqnarray}{rCl}
	\mathop{\hbox{max}}\nolimits_{\btV_{lw}}\tr\left\{\btV_{lw}'\bR_l\btV_{lw}\right\},\,\, s.t.\,\,\btV_{lw}'\btV_{lw}=\bI, \label{eqn:blda.obj1.w}
\end{IEEEeqnarray}
where $\bR_l=\bGam_{lw}^{-1/2}\bU'_{lw}\bS_{lb}\bU_{lw}\bGam_{lw}^{-1/2}$. Denote the solution to problem \refe{eqn:blda.obj1.w} be $\bV_{lR}$, the solution to problem \refe{eqn:blda.obj.w} is then given by $\bV_{lw}=\bU_{lw}\bGam_{lw}^{-1/2}\bV_{lR}$.

Next we analyze the main time complexity of this procedure in column direction since the analysis in row direction is similar. The formation of the $d_1\times d_1$ matrix $\bS_{1w}$ costs $O(d_1^2d_2n)$ and its eigen-decomposition takes $O(d_1^3)$. Computing $\bR_1$ and its eigen-decomposition cost $O(d_1^3)$. Hence the total cost for $\bV_{1w}$ is $O(d_1^2d_2n)+O(d_1^3)$. It can be seen that the cost could be very heavy when $d_1$ is much greater than $d_2$.
%Similarly, finding $\bV_{2w}$ takes $O(d_1d_2^2n)+O(d_2^3)$. $O((d_1+d_2)d_1d_2n)+O(d_1^3)+O(d_2^3)$ Therefore, the total cost of this procedure is $O(d_1^2d_2n)+O(d_1^3)$

Another shortcoming of this procedure is that it requires both of $\bS_{1w}$ and $\bS_{2w}$ to be invertible. Since $\rank(\bS_{1w})\leq\min(d_1,d_2(n-c))$ and $\rank(\bS_{2w})=\min(d_2,d_1(n-c))$, for data $d_1>d_2(n-c)$ or $d_2>d_1(n-c)$, $\bS_{1w}$ or $\bS_{2w}$ is singular and thus this procedure cannot be performed. Motivated by the success that RLDA improves LDA when the within-class or total scatter matrix is singular or approximately singular, in \refs{sec:rblda} we follow the idea of RLDA and propose regularized bilinear LDA (RBLDA) to overcome the two problems suffered by BLDA.

\section{Regularized Bilinear LDA (RBLDA)}\label{sec:rblda}
In this section, we extend the idea of RLDA to BLDA and propose regularized bilinear LDA (RBLDA) for MTS data classification. In \refs{sec:rblda.fml}, we formulate the optimization problem for our proposed RBLDA. In \refs{sec:rblda.alg}, we present an efficient implementation, based on which, in \refs{sec:rblda.ms} we propose an efficient model selection algorithm for RBLDA. 
\subsection{Regularized BLDA (RBLDA)}\label{sec:rblda.fml}
Similar to those in RLDA, we define column-column and row-row total scatter matrices as
\begin{IEEEeqnarray}{rCl}
	\bS_{1t}&=&\frac1{Nd_2}\sum\nolimits_i(\bX_i-\bW)(\bX_i-\bW)',\label{eqn:blda.S1t}\\ \bS_{2t}&=&\frac1{Nd_1}\sum\nolimits_i(\bX_i-\bW)'(\bX_i-\bW).\label{eqn:blda.S2t}
\end{IEEEeqnarray}
We solve the following two eigenproblems,
%\begin{IEEEeqnarray}{rCl}
\begin{equation}
	{\bS_{lt}^{r_l}}^{-1}\bS_{lb}\bV_l=\bV_l\bLmd_l,\quad l=1,2,\label{eqn:rblda.sol1}
\end{equation}
%\\
%{\bS_{2t}^{r_2}}^{-1}\bS_{2b}\bV_2&=&\bV_2\bLmd_2,\label{eqn:rblda.sol1.2}
%\end{IEEEeqnarray}
under the constraints that $\bV_l'\bS_{lt}\bV_l=\bI_{q_l}$, where  
\begin{equation}
	\bS_{lt}^{r_l}=(1-r_l)\bS_{lt}+r_l\hat{\sigma}_l^2\bI_{d_l},\label{rblda.reguS1}
\end{equation}
$\hat{\sigma}_l^2=\tr(\bS_{lt})/d_l$, and the regularization parameters $r_1,r_2\in(0, 1]$. Note that the solutions $\bV_{lw}$ to the optimization problems \refe{eqn:rblda.sol1} under the constraints that $\bV_{lw}'\bS_{lw}\bV_{lw}=\bI$ can be easily obtained by \refe{eqn:rlda.Vw}.
%$\hat{\sigma}_2^2=\tr(\bS_{2t})/d_2$,
%\begin{IEEEeqnarray}{rCl}
%\\
%\bS_{2t}^{r_2}&=&(1-r_2)\bS_{2t}+r_2\hat{\sigma}_2^2\bI_{d_2},\label{rblda.reguS2}
%\end{IEEEeqnarray}

%\subsection{Alternative representation of the eigenproblems}
\subsection{Efficient implementations}\label{sec:rblda.alg}
In this subsection, we develop efficient implementations for RBLDA with $(r_1, r_2)$ given. 
%so that RBLDA and the cross validation procedure for RBLDA can be implemented efficiently.
\subsubsection{First efficient implementation of RBLDA with $(r_1, r_2)$ given }\label{sec:imp.col}
We extend the efficient algorithm of RLDA in \refs{sec:rlda.eff} to RBLDA with regularization parameter $(r_1, r_2)$ given. We first consider the implementation in column direction with $r_1$ given. Let 
\begin{IEEEeqnarray*}{rCl}
	\bX_{(1)}&=&[\bX_1,\dots,\bX_n]\,\, (d_1\times d_2n), \\
	\bW_{(1)}&=&[\bW_1,\dots,\bW_c]\,\, (d_1\times d_2c), \\
	\bPi_1&=&\bdiag{(n_1\bI_{d_2},\dots,n_c\bI_{d_2})}\,\, (d_2c\times d_2c),\\
	\bPi_1^{1/2}&=&\bdiag{(\sqrt{n}_1\bI_{d_2},\dots,\sqrt{n}_c\bI_{d_2})}\,\, (d_2c\times d_2c), \\
	\bpi_1&=&(n_1\bI_{d_2},\dots,n_c\bI_{d_2})'\,\, (d_2c\times d_2),\\
	\sqrt{\bpi_1}&=&(\sqrt{n}_1\bI_{d_2},\dots,\sqrt{n}_c\bI_{d_2})'\,\, (d_2c\times d_2), \\
	\mathbb{I}_{1m}&=&(\bI_{d_2},\dots,\bI_{d_2})'\,\, (d_2m\times d_2),\\
	\bH_1&=&\bI_{d_2n}-\frac1n\mathbb{I}_{1n}\mathbb{I}'_{1n}\,\, (d_2n\times d_2n), \\ 
	\bH_{1\pi}&=&\bI_{d_2c}-\frac1n\sqrt{\bpi_1}\sqrt{\bpi_1}'\,\, (d_2c\times d_2c), 
\end{IEEEeqnarray*}
$\bbE_1=(\bE_{ij})\,\, (d_2n\times d_2c)$ be a $n\times c$ partitioned matrix with the $d_2\times d_2$ $ij$-th submatrices $\bE_{ij}=\bI_{d_2}$ if $\bX_i$ belonging to class $j$ and $\bE_{ij}=\bo_{d_2}$ otherwise. 
%Here, the operator $\bB=\bdiag{(\bA_1,\dots,\bA_m)}$ stands for the block diagonal matrix formed by aligning the input matrices $\bA_1,\dots,\bA_m$ along the diagonal of $\bB$.  

Without loss of generality, we assume that the data has been centered, i.e., $\bX_{(1)}=\bX_{(1)}\bH_1$. With these representations, we have
$\mathbb{I}'_{1n}\bbE_1=\mathbb{I}'_{1c}\bPi_1=\bpi_1'$, $\bbE_1\mathbb{I}_{1c}=\mathbb{I}_{1n}$, $\mathbb{I}'_{1c}\bpi_1=n\bI_{d_2}$, $\bbE'_1\bbE_1=\bPi_1$, $\bPi_1^{-1}\bpi_1=\mathbb{I}_{1c}$, and
\begin{IEEEeqnarray*}{rCl}
	\bW_{(1)}&=&\bX_{(1)}\bbE_1\bPi_1^{-1},\\
	\bbE_1\bPi_1^{-1/2}\bH_{1\bpi}&=&\bbE_1\bPi_1^{-1/2}-\frac1n\bbI_{1n}\sqrt{\bpi_1}'=\bH_1\bbE_1\bPi_1^{-1/2},
\end{IEEEeqnarray*}
Noting that $\bH_1=\bH_1^2$, $\bS_{1t}$ in \refe{eqn:blda.S1t} can be rewritten as
\begin{equation}
	\bS_{1t}=\bX_{(1)}\bX'_{(1)},\label{eqn:rblda.S1t}
\end{equation}
and $\bS_{1b}$ in \refe{eqn:blda.S1b} as
\begin{IEEEeqnarray}{rCl}
	\bS_{1b}&=&\bW_{(1)}\left[\bPi_1-\frac1n\bpi_1\bpi'_1\right]\bW'_{(1)}=\bW_{(1)}\cdot\nonumber\\
	&&\hskip-1.5em\left[\bPi_1^{1/2}-\frac1n\bpi_1\sqrt{\bpi}'_1\right]\left[\bPi_1^{1/2}-\frac1n\sqrt{\bpi_1}\bpi'_1\right]\bW'_{(1)},\nonumber\\
	&=&\bX_{(1)}\bbE_1\bPi_1^{-1}\bPi_1^{1/2}\bH_{1\pi}\bH_{1\pi}\bPi_1^{1/2}\bPi_1^{-1}\bbE_1\bX'_{(1)},\nonumber\\
	&=&\bX_{(1)}\bbE_1\bPi_1^{-1}\bbE'_1\bX'_{(1)}\nonumber\\
	&=&\bF_{1b}\bF_{1b}',\label{eqn:rblda.S1b}
\end{IEEEeqnarray}
where $\bF_{1b}=\bX_{(1)}\bbE_1\bPi_1^{-1/2}$. In addition, we define 
\begin{IEEEeqnarray}{rCl}
	\bT_1&=&\bX'_{(1)}\bX_{(1)},\nonumber\\
	\bT_1^{r_1}&=&(1-r_1)\bX'_{(1)}\bX_{(1)}+r_1\hat{\sigma}_1^2\bI_{d_2n}\label{rblda.reguT1}
\end{IEEEeqnarray}
Let the condensed SVD of $\bS_{1t}(d_1\times d_1)$ in \refe{eqn:rblda.S1t} and $\bT_1(d_2n\times d_2n)$ be 
\begin{IEEEeqnarray}{rCl}
	\bS_{1t}&=&\bX_{(1)}\bX'_{(1)}=\bU_{1t}\bGam_{1t}\bU'_{1t},\label{eqn:rblda.S1t.SVD}\\
	\bT_1&=&\bX'_{(1)}\bX_{(1)}=\bU_{1T}\bGam_{1t}\bU'_{1T},\label{eqn:rblda.T1.SVD}
\end{IEEEeqnarray}
where $t_1=\rank(\bGam_{1t})$. When $d_1>d_2n$, the condensed SVD in \refe{eqn:rblda.S1t.SVD} can be obtained via \refe{eqn:rblda.T1.SVD}. Thus in this case the complexity of the condensed SVD of $\bS_{1t}$ is $O(d_1(d_2n)^2)$.

Accordingly, the EVD of $\bS_{1t}^{r_1}$ in \refe{rblda.reguS1} and $\bT_1^{r_1}$ \refe{rblda.reguT1} are given by
\begin{IEEEeqnarray}{rCl}
	\bS_{1t}^{r_1}&=&\bU_{1t}\bGam_{1t}^{r_1}\bU'_{1t}+r_1\hat{\sigma}_1^2\bU_{1t}^\perp{\bU_{1t}^\perp}',\label{eqn:rblda.S1tr1.SVD}\\
	\bT_1^{r_1}&=&\bU_{1T}\bGam_{1T}^{r_1}\bU'_{1T}+r_1\hat{\sigma}_1^2\bU_{1T}^\perp{\bU_{1T}^\perp}',\nonumber%\label{eqn:rblda.T1r1.SVD}
\end{IEEEeqnarray}
where $\bGam_{1t}^{r_1}=(1-r_1)\bGam_{1t}+r_1\hat{\sigma}_1^2\bI_{d_1}$ $\bGam_{1T}^{r_1}=(1-r_1)\bGam_{1T}+r_1\hat{\sigma}_1^2\bI_{d_2n}$, and $\bU_{it}^\perp$ is the orthogonal complement of $\bU_{it}, i=1,2$.

Next, we solve the eigenproblem in \refe{eqn:rblda.sol1} in two cases: $d_2c<d_1$ and $d_2c\geq d_1$, under which we will see that the problem can be implemented with different computational costs.

(i) When $d_2c\geq d_1$, with \refe{eqn:rblda.S1tr1.SVD}, the eigenproblem in \refe{eqn:rblda.sol1} is
\begin{equation*}
	\bU_{1t}{\bGam_{1t}^{r_1}}^{-1/2}\bG_{11}\bV_1=\bV_1\bLmd_1,\label{eqn:rblda.sol1.regu.1}
\end{equation*}
where 
\begin{equation*}
	\bG_{11}={\bGam_{1t}^{r_1}}^{-1/2}\bU'_{1t}\bS_{1b}.\label{eqn:rblda.G11}
\end{equation*}
Let 
\begin{IEEEeqnarray}{rCl}
	\bR_{11}&=&\bG_{11}\bU_{1t}{\bGam_{1t}^{r_1}}^{-1/2}\nonumber\\
	&=&{\bGam_{1t}^{r_1}}^{-1/2}\bU'_{1t}\bS_{1b}\bU_{1t}{\bGam_{1t}^{r_1}}^{-1/2}.\label{eqn:rblda.R11}
\end{IEEEeqnarray}
Since the $d_1\times d_1$ matrices $\bR_{11}$ and $\bU_{1t}{\bGam_{1t}^{r_1}}^{-1/2}\bG_{11}$ have the same nonzero eigenvalues, if $(\bLmd_1,\bV_{1R})$ is the eigenpair of $\bR_{11}$ and $\bV_1=\bU_{1t}{\bGam_{1t}^{r_1}}^{-1/2}\bV_{1R}$, we have $(\bLmd_1,\bV_1)$ is the eigenpair of $\bU_{1t}{\bGam_{1t}^{r_1}}^{-1/2}\bG_{11}$. 

(ii) When $d_2c<d_1$, using \refe{eqn:rblda.S1b}, the eigenproblem in \refe{eqn:rblda.sol1} can be represented by
\begin{equation*}
	\bG_{12}\bF_{1b}'\bV_1=\bV_1\bLmd_1,\label{eqn:rblda.sol1.regu.2}
\end{equation*}
where 
\begin{equation}
	\bG_{12}={\bS_{1t}^{r_1}}^{-1}\bF_{1b}.\,\,(d_1\times d_2c)\label{eqn:rblda.G12.1}
\end{equation}
Since ${\bS_{1t}^{r_1}}^{-1}\bX_{(1)}=\bX_{(1)}{\bT_1^{r_1}}^{-1}$, we also have 
\begin{equation}
	\bG_{12}=\bX_{(1)}{\bT_1^{r_1}}^{-1}\bbE_1\bPi_1^{-1/2}.\label{eqn:rblda.G12.2}
\end{equation}
When $d_2n<d_1$, it is cheaper to compute $\bG_{12}$ by \refe{eqn:rblda.G12.2}. Let 
\begin{IEEEeqnarray}{rCl}
	\bR_{12}&=&\bPi_1^{-1/2}\bbE_1'\bX'_{(1)}\bG_{12}\quad(d_2c\times d_2c)\label{eqn:rblda.R12}\\
	&=&\bF_{1b}'{\bS_{1t}^{r_1}}^{-1}\bF_{1b}.\nonumber
\end{IEEEeqnarray}
The $d_2c\times d_2c$ matrix $\bR_{12}$ and $d_1\times d_1$ matrix $\bG_{12}\bF_{1b}'$ have the same nonzero eigenvalues, if $(\bLmd_1,\bV_{1R})$ is the eigenpair of $\bR_{12}$ and $\bV_1=\bG_{12}\bV_{1R}$, $(\bLmd_1,\bV_1)$ is the eigenpair of $\bG_{12}\bF_{1b}'$. 

When $d_2c\geq d_1$, the cost is the same as that of BLDA in \refs{sec:blda}, i.e.,  $O(d_1^2d_2n)+O(d_1^3)$. When $d_2c<d_1$, calculating $\bG_{12}$ via \refe{eqn:rblda.G12.1} or \refe{eqn:rblda.G12.2} costs $O(d_1d_2na_1)+O(d_1d_2ca_1)$, where $a_1=\min(d_1,d_2n)$. Computing $\bR_{12}$ and $\bG_{12}\bV_1$ takes $O(d_1(d_2c)^2)$. Feature extraction for $\bX$ (i.e., computation of $\bV_1'\bX_{(1)}$) takes $O(d_1d_2nd_2c)$. Thus the total cost is $O(d_1d_2na_1+d_1d_2nd_2c)$. Compared with that of BLDA, the cost with this implementation is substantially reduced. However, if a total of $m_1$ candidates of $r_1$ are considered, then the total cost of running this procedure would be $O(m_1(d_1d_2na_1+d_1d_2nd_2c))$.

\subsubsection{Second efficient implementation of RBLDA with $(r_1, r_2)$ given}\label{sec:rblda.eff.imp}
In this subsection, we shall develop an efficient implementation for RBLDA with $(r_1, r_2)$ given, which is a variant of that in \refs{sec:imp.col}. The advantage is that this implementation allows that the cross validation for RBLDA on all candidates of $(r_1, r_2)$ can be performed efficiently. We first consider the implementation in column direction with $r_1$ given. Let 
\begin{IEEEeqnarray}{rCl}
	\bX_{(1u)}=\bU_{1t}'\bX_{(1)},\,\,(t_1\times d_2n).\label{eqn:X1proj}
\end{IEEEeqnarray}
where $t_1=\rank(\bX_{(1)})$. By \refe{eqn:rblda.S1tr1.SVD}, the regularized total scatter matrix in $\bX_{(1u)}$-space is   
\begin{IEEEeqnarray}{rCl}
	\bS_{1tu}^{r_1}&=&\bGam_{1t}^{r_1}.\label{eqn:rblda.S1tr1u.SVD}
	%\bF_{1bu}&=&\bX_{(1u)}\bbE_1\bPi_1^{-1/2},\,\,(t_1\times d_2c)\label{eqn:rblda.F1bu}
\end{IEEEeqnarray}

Define
\begin{IEEEeqnarray}{rCl}
	\bF_{1bu}&=&\bX_{(1u)}\bbE_1\bPi_1^{-1/2}.\,\,(t_1\times d_2c)\label{eqn:rblda.F1bu}
\end{IEEEeqnarray}
%by\refe{eqn:rblda.S1b}, the between-class scatter matrix in $\bX_{(1u)}$-space is %$\bS_{1bu}=\bF_{1bu}\bF_{1bu}'$.
(i) When $d_2c\geq d_1$, with \refe{eqn:rblda.F1bu}, $\bR_{11}$ \refe{eqn:rblda.R11} in $\bX_{(1u)}$-space remains unchanged, i.e.,
\begin{IEEEeqnarray}{rCl}
	\bR_{11}={\bGam_{1t}^{r_1}}^{-1/2}\bF_{1bu}\bF_{1bu}'{\bGam_{1t}^{r_1}}^{-1/2},\label{eqn:rblda.R11u}
\end{IEEEeqnarray}
and the $\bV$-solution in $\bX_{(1u)}$-space is $\bV_{1u}={\bGam_{1t}^{r_1}}^{-1/2}\bV_{1R} (t_1\times d_2c)$.\\
(ii) When $d_2c< d_1$, by \refe{eqn:rblda.S1tr1.SVD}, $\bG_{12}$ and $\bR_{12}$ in $\bX_{(1u)}$-space (\refe{eqn:rblda.G12.1} and \refe{eqn:rblda.R12}) are
\begin{IEEEeqnarray}{rCl}
	\bG_{12u}&=&{\bGam_{1t}^{r_1}}^{-1}\bF_{1bu},\,\,(t_1\times d_2c)\label{eqn:rblda.G12.1u}\\
	\bR_{12}&=&\bF_{1bu}'\bG_{12u}=\bF_{1bu}'{\bGam_{1t}^{r_1}}^{-1}\bF_{1bu},\label{eqn:rblda.R12u}
\end{IEEEeqnarray}
and the $\bV$-solution in $\bX_{(1u)}$-space is $\bV_{1u}=\bG_{12u}\bV_{1R} (t_1\times d_2c)$. It will be seen from \refs{sec:rblda.ms} that when $d_1$ is much greater than $d_2$, the cost of this implementation for $r_1$ is $O(d_1d_2na_1+d_2nd_2ca_1)$. Since $a_1\leq d_1$, the cost could be slightly reduced compared with that in \refs{sec:imp.col}.

% Calculating $\bG_{12u}$ via \refe{eqn:rblda.G12.1u} costs $O(t_1d_2cb_1)$, where $b_1=\min(t_1,d_2n)$ and hence $b_1\leq t_1\leq a_1=\min(d_1,d_2n)$. Computing $\bR_{12u}$ and $\bG_{12}\bV_1$ takes $O(t_1(d_2c)^2)$. The total cost for $m_1$ candidates is $O(m_1t_1d_2cb_1)$. When $d_1\gg d_2n$, this cost is significantly smaller than the cost $O(m_1d_1d_2ca_1)$ in \refs{sec:imp.col}.

The $\bV$-solution in the original $\bX_{(1)}$-space can be obtained by $\bV_1=\bU_{1t}\bV_{1u}$ under both cases. However, this is not necessary if we use the nearest neighbor classifier since the distance computations based on $\bV_{1u}$ and $\bV_1$ are the same.  

%\subsubsection{Row direction}
The computation in row direction with $r_2$ given is similar to that in column direction. We denote 
\begin{IEEEeqnarray*}{rCl}
	\bX_{(2)}&=&[\bX'_1,\dots,\bX'_n]\,\, (d_2\times d_1n), \\
	\bW_{(2)}&=&[\bW'_1,\dots,\bW'_c]\,\, (d_2\times d_1c),\\ \bPi_2&=&\bdiag{(n_1\bI_{d_1},\dots,n_c\bI_{d_1})}\,\, (d_1c\times d_1c),\\ \
	\bPi_2^{1/2}&=&\bdiag{(\sqrt{n}_1\bI_{d_1},\dots,\sqrt{n}_c\bI_{d_1})}\,\, (d_1c\times d_1c), \\ 
	\bpi_2&=&(n_1\bI_{d_1},\dots,n_c\bI_{d_1})'\,\, (d_1c\times d_1),\\ 
	\sqrt{\bpi_2}&=&(\sqrt{n}_1\bI_{d_1},\dots,\sqrt{n}_c\bI_{d_1})'\,\, (d_1c\times d_1), \\
	\mathbb{I}_{2m}&=&(\bI_{d_1},\dots,\bI_{d_1})'\,\, (d_1m\times d_1),\\
	\bH_2&=&\bI_{d_1n}-\frac1n\mathbb{I}_{2n}\mathbb{I}'_{2n}\,\, (d_1n\times d_1n), \\
	\bH_{2\pi}&=&\bI_{d_1c}-\frac1n\sqrt{\bpi_2}\sqrt{\bpi_2}'\,\, (d_1c\times d_1c),
\end{IEEEeqnarray*}
$\bbE_2=(\bE_{ij})\,\, (d_1n\times d_1c)$ be a $n\times c$ partitioned matrix with the $d_1\times d_1$ $ij$-th submatrices $\bE_{ij}=\bI_{d_1}$ if $\bX_i$ belonging to class $j$ and $\bE_{ij}=\bo_{d_1}$ otherwise.

Assuming that $\bX_{(2)}=\bX_{(2)}\bH_2$, $\bS_{2b}$ in \refe{eqn:blda.S2b} is rewritten as
\begin{IEEEeqnarray}{rCl}
	%\bS_{2t}&=&\bX_{(2)}\bX'_{(2)},\label{eqn:rblda.S2t}\\
	\bS_{2b}=\bX_{(2)}\bbE_2\bPi_2^{-1}\bbE'_2\bX'_{(2)},\label{eqn:rblda.S2b}
\end{IEEEeqnarray}
%Define 
%\begin{IEEEeqnarray}{rCl}
%\bT_2&=&\bX'_{(2)}\bX_{(2)},\nonumber\\
%\bT_2^{r_2}&=&(1-r_2)\bX'_{(2)}\bX_{(2)}+r_2\hat{\sigma}_2^2\bI_{d_1n}.\label{rblda.reguT2}
%\end{IEEEeqnarray}
The condensed SVD of $\bS_{2t}$ in \refe{eqn:blda.S2t} is 
\begin{IEEEeqnarray}{rCl}
	\bS_{2t}=\bX_{(2)}\bX'_{(2)}=\bU_{2t}\bGam_{2t}\bU'_{2t},\label{eqn:rblda.S2t.SVD}
	%\bT_2&=&\bX'_{(2)}\bX_{(2)}=\bU_{2T}\bGam_{2T}\bU'_{2T},\label{eqn:rblda.T2.SVD}
\end{IEEEeqnarray}
where $t_2=\rank(\bGam_{2t})$. Accordingly, the EVD of $\bS_{2t}^{r_2}$ in \refe{rblda.reguS1} is given by %and $\bT_2^{r_2}$
%in \refe{rblda.reguT2} 
\begin{IEEEeqnarray}{rCl}
	\bS_{2t}^{r_2}=\bU_{2t}\bGam_{2t}^{r_2}\bU'_{2t}+r_2\hat{\sigma}_2^2\bU_{2t}^\perp{\bU_{2t}^\perp}',\label{eqn:rblda.S2tr2.SVD}%\\
	%\bT_2^{r_2}&=&\bU_{2T}\bGam_{2T}^{r_2}\bU'_{2T},\label{eqn:rblda.T2r2.SVD}
\end{IEEEeqnarray}
where $\bGam_{2t}^{r_2}=(1-r_2)\bGam_{2t}+r_2\hat{\sigma}_2^2\bI_{d_1}$.
Let 
\begin{IEEEeqnarray}{rCl}
	\bX_{(2u)}=\bU_{2t}'\bX_{(2)}.\,\,(t_2\times d_1n)\label{eqn:X2proj}
\end{IEEEeqnarray}
By \refe{eqn:rblda.S2tr2.SVD}, the regularized total scatter matrices in $\bX_{(2u)}$-space is 
% and \refe{eqn:rblda.S2b}, the between-class and   
\begin{IEEEeqnarray*}{rCl}
	\bS_{2tu}^{r_1}=\bGam_{2t}^{r_2},\label{eqn:rblda.S2tr2u.SVD}%\\ 
	%\bS_{2bu}&=&\bX_{(2u)}\bbE_2\bPi_2^{-1}\bbE'_2\bX'_{(2u)},\label{eqn:rblda.S2bu}
\end{IEEEeqnarray*}

Define
\begin{IEEEeqnarray}{rCl}
	\bF_{2bu}=\bX_{(2u)}\bbE_2\bPi_2^{-1/2}.\,\,(t_2\times d_1c)\label{eqn:rblda.F2bu}
\end{IEEEeqnarray}
% and $\bGam_{2T}^{r_2}=(1-r_2)\bGam_{2T}+r_2\hat{\sigma}_2^2\bI_{d_1n}$.
%Similar as column direction, we solve the eigenproblem in \refe{eqn:rblda.sol1.2} in two cases: $d_1c<d_2$ and $d_1c\geq d_2$.
(i) When $d_1c\geq d_2$, let 
\begin{IEEEeqnarray}{rCl}
	\bR_{21}={\bGam_{2t}^{r_2}}^{-1/2}\bF_{2bu}\bF_{2bu}'{\bGam_{2t}^{r_2}}^{-1/2}.\label{eqn:rblda.R21}
\end{IEEEeqnarray}
If $(\bLmd_2,\bV_{2R})$ is the eigenpair of $\bR_{21}$ and  $\bV_{2u}={\bGam_{2t}^{r_2}}^{-1/2}\bV_{2R}$, then $(\bLmd_2,\bV_{2u})$ is the solution in $\bX_{(2u)}$-space.\\ 
(ii) When $d_1c<d_2$, let 
\begin{IEEEeqnarray}{rCl}
	\bG_{22u}&=&{\bGam_{2t}^{r_2}}^{-1}\bF_{2bu}, \label{eqn:rblda.G22.1u}\\
	\bR_{22}&=&\bF_{2bu}'{\bGam_{2t}^{r_2}}^{-1}\bF_{2bu}.\label{eqn:rblda.R22u}
\end{IEEEeqnarray}
If $(\bLmd_2,\bV_{2R})$ is the eigenpair of $\bR_{22}$, and $\bV_{2u}=\bG_{22u}\bV_{2R}$ then $(\bLmd_2,\bV_{2u})$ is solution in $\bX_{(2u)}$-space. 

%the eigenproblem in \refe{eqn:rblda.sol1.2} can be represented by
%\begin{equation*}
%\bG_{22}\bPi_2^{-1/2}\bbE'_2\bX'_{(2)}\bV_2=\bV_2\bLmd_2,\label{eqn:rblda.sol2.regu.2}
%\end{equation*}
%where 
%\begin{IEEEeqnarray}{rCl}
%\bG_{22}&=&{\bS_{2t}^{r_2}}^{-1}\bX_{(2)}\bbE_2\bPi_2^{-\frac12} \label{eqn:rblda.G22.1}\\
%&=&\bX_{(2)}{\bT_2^{r_2}}^{-1}\bbE_2\bPi_2^{-\frac12}. \label{eqn:rblda.G22.2}
%\end{IEEEeqnarray}%\IEEEeqnarraynumspace
%When $d_1n<d_2$, the computation of \refe{eqn:rblda.G22.2} is cheaper. Let 
%\begin{IEEEeqnarray}{rCl}
%\bR_{22}&=&\bPi_2^{-1/2}\bbE_2'\bX'_{(2)}\bG_{22}\nonumber\\
%&=&\bPi_2^{-1/2}\bbE_2'\bX'_{(2)}{\bS_{2t}^{r_2}}^{-1}\bX_{(2)}\bbE_2\bPi_2^{-1/2},\label{eqn:rblda.R22}
%\end{IEEEeqnarray}
\subsection{Efficient model selection algorithm for RBLDA}\label{sec:rblda.ms}
Based on the implementation in \refs{sec:rblda.eff.imp}, in this subsection we propose an efficient model selection algorithm for RBLDA on all candidates of $(r_1, r_2)$.

Thanks to \refe{eqn:X1proj}, which is the same for $m_1$ candidates of $r_1$, and hence the cost of the RBLDA algorithm in \refs{sec:rblda.eff.imp} for $m_1$ candidates of $r_1$ can be reduced to $O(d_1d_2na_1+m_1(d_2nd_2ca_1))$ only. In other words, the cross validation for RBLDA can be performed efficiently. For clarity, the whole model selection algorithm of RBLDA is summarized in \refa{alg:rblda}.
\begin{algorithm}[htbp]
	\caption{Efficient RBLDA model selection algorithm.}
	\label{alg:rblda}
	\begin{algorithmic}[1]
		\REQUIRE ($\bX,\bbE_l,\bPi_l,\{r_{li}\}_{i=1}^{m_l}, l=1,2$).
		\FOR{$v=1:V$}
		%\COMMENT $V$-fold cross validation
		\STATE Split $\bX$ as training $\bX_v$ and validation $\bX_v^\perp$; 
		\STATE Perform condensed SVD of $\bS_{1t}$ by \refe{eqn:rblda.S1t.SVD} on $\bX_v$. \label{alg:S1t.SVD}
		\STATE Compute $\bX_{(1u)}$ by \refe{eqn:X1proj}, $\bF_{1bu}$ by \refe{eqn:rblda.F1bu} on $\bX_v$. \label{alg:F1bu}
		\FOR{$i=1:m_1$}%
		\STATE Given $r_{1i}$, \COMMENT $i$-th candidate from $m_1$ choices of $r_1$.
		%\IF{} 
		\STATE When $d_2c\geq d_1$, compute $\bR_{11}$ by \refe{eqn:rblda.R11u}, and \label{alg:R11}
		%	\ELSE 
		\STATE compute $\bG_{12u}$ by \refe{eqn:rblda.G12.1u} and $\bR_{12}$ by \refe{eqn:rblda.R12u} otherwise. \label{alg:R12}		
		%\ENDIF	
		\STATE Perform the condensed SVD of $\bR_{11}$ or $\bR_{12}$ as $\bV_{1R}\bLmd_1\bV'_{1R}$ and set $\bV_{1ui}={\bGam_{1t}^{r_1}}^{-1/2}\bV_{1R}$ or $\bV_{1ui}=\bG_{12u}\bV_{1R}[\bLmd_1(\bI-(1-r_1)\bLmd_1)]^{-\frac12}$.\label{alg:R1.SVD} 
		\ENDFOR	
		%	\STATE Perform the condensed SVD of $\bR_{12}$ as $\bR_{12}=\bV_{1R}\bLmd_{1R}\bV'_{1R}$ and set $\bV_{1ui}=\bG_{12u}\bV_{1R}[\bLmd_{1R}\cdot$ $(\bI-(1-r_1)\bLmd_{1R})]^{-1/2}$.	\label{alg:R12.SVD}			 	
		%		Perform the condensed SVD of $\bR_{12}$ as $\bR_{12}=\bV_{1R}\bLmd_{1R}\bV'_{1R}$ and set .	\label{alg:R12.SVD}	
		\STATE Perform condensed SVD of $\bS_{2t}$ by \refe{eqn:rblda.S2t.SVD} on $\bX_v$.\label{alg:S2t.SVD}
		\STATE Compute $\bX_{(2u)}$ by \refe{eqn:X2proj}, $\bF_{2bu}$ by \refe{eqn:rblda.F2bu} on $\bX_v$.\label{alg:F2bu}
		\FOR{$j=1:m_2$}
		\STATE Given $r_{2j}$, \COMMENT $j$-th candidate from $m_2$ choices of $r_2$.
		%\IF{$d_1c\geq d_2$} 
		\STATE When $d_1c\geq d_2$, compute $\bR_{21}$ by \refe{eqn:rblda.R21}, and \label{alg:R21}
		%\ELSE 
		\STATE compute $\bG_{22u}$ by \refe{eqn:rblda.G22.1u} and $\bR_{22}$ by \refe{eqn:rblda.R22u} otherwise.\label{alg:R22}			
		%\ENDIF		 
		\STATE Perform the condensed SVD of $\bR_{21}$ or $\bR_{22}$ as $\bV_{2R}\bLmd_2\bV'_{2R}$ and set $\bV_{2uj}={\bGam_{2t}^{r_2}}^{-1/2}\bV_{2R}$ or $\bV_{2uj}=\bG_{22u}\bV_{2R}[\bLmd_2(\bI-(1-r_2)\bLmd_2)]^{-\frac12}$. \label{alg:R2.SVD}
		\ENDFOR	
		\FOR{$i=1:m_1$} 
		\FOR{$j=1:m_2$}
		\STATE $\bX_v\leftarrow\bV_{1ui}'\bX_v\bV_{2uj}$,\\ $\bX_v^\perp\leftarrow\bV_{1ui}'\bX_v^\perp\bV_{2uj}$.\label{alg:Xiproj}
		\STATE Run 1NN on $(\bX_v,\bX_v^\perp)$ and compute the error rate $\mbox{Err}(v,i,j)$. \label{alg:NN}
		\ENDFOR	
		\ENDFOR	
		\STATE $\mbox{Err}(i,j)=1/v\sum_{v=1}^V\mbox{Err}(v,i,j)$.
		\STATE $(i^*,j^*)\leftarrow \mbox{arg min}_{(i,j)}\mbox{Err}(i,j)$.
		\ENDFOR	
		\ENSURE $\{\bV_{1ui}\}_{i=1}^{m_1}$, $\{\bV_{2uj}\}_{j=1}^{m_2}$.
	\end{algorithmic}
\end{algorithm}

%\subsubsection{Computational complexity analysis}\label{sec:cpx}
Below we analyze the complexity of the feature extraction stage of \refa{alg:rblda} (i.e., not involving the classification stage (Line~\ref{alg:NN})). Line~\ref{alg:S1t.SVD} takes $O(d_1d_2na_1)$ and line~\ref{alg:F1bu} costs $O(t_1d_1d_2n)$. When $d_2c\geq d_1$, line~\ref{alg:R11} takes $O(t_1^2d_2c)$ and line~\ref{alg:R1.SVD} $O(t_1^3)$, and line~\ref{alg:R12} and \ref{alg:R1.SVD} take $O(t_1(d_2c)^2)$ otherwise. Line~\ref{alg:Xiproj} takes $O(t_1d_2n\cdot\min(t_1,d_2c))$. The total cost for $m_1$ and $m_2$ candidates in column and row directions is
%For simplicity, we assume that $m_1=m_2=m$. 
\begin{IEEEeqnarray*}{rCl}
	T(m_1,m_2)&=&O\left(d_1d_2na_1+m_1m_2(t_2d_2\min(t_1,d_2c)n)\right.\\
	&&\left.+\>m_1(t_1d_2c+t_1d_2n)\min(t_1,d_2c)\right).%\\
%	&&\left.+m_1m_2(t_2d_2\min(t_1,d_2c)n)\right)
\end{IEEEeqnarray*}

(i) When $d_2c>d_1$ and $d_1c>d_2$, i.e, $d_2\approx d_1$, the costs in column and row directions are about the same and thus  we only analyze the cost in column direction. In this case,  $d_2c>t_1$, $t_1\leq a_1$ and hence  
\begin{IEEEeqnarray*}{rCl}
	T(m_1,m_2)&=&O(d_1^2d_2n+m_1(d_1^2d_2n)+m_1m_2(d_2^2d_1n)).
\end{IEEEeqnarray*}
The ratio of $T(m_1,m_2)$ to $T(1,1)$ can be approximately expressed as 
$$\frac{T(m_1,m_2)}{T(1,1)}\approx\frac{m_1m_2}3.$$

(ii) When $d_1>d_2c$, in particular, $d_1$ is much higher than $d_2$, the cost mainly lies in column direction. For simplicity, we assume that $t_1>d_2c$ and obtain
\begin{IEEEeqnarray*}{rCl}
	T(m_1,m_2)&=&O(d_1d_2na_1+m_1(d_2nd_2ca_1)\\
	&&+\>m_1m_2(d_2^2d_2cn)).
\end{IEEEeqnarray*}
Note that for single $(r_1,r_2)$, the cost of $T(1,1)$ in this case can be expressed as  $O(d_1d_2na_1+d_2nd_2ca_1)$, which is mentioned in \refs{sec:rblda.eff.imp}.

The ratio of $T(m_1,m_2)$ to $T(1,1)$ can be approximately expressed as
$$\frac{T(m_1,m_2)}{T(1,1)}\approx1+m_1\frac{d_2c}{d_1}+m_1m_2\frac{d_2^2c}{d_1a_1}.$$
%Line~\ref{alg:S2t.SVD} takes $O(d_2d_1na_2)$ and Line~\ref{alg:F2bu} costs $O(t_2d_2d_1n)$. Line~\ref{alg:R21} takes $O(t_2^2d_1c)$ and Line~\ref{alg:R21.SVD} takes $O(t_2^3)$. Line~\ref{alg:R22} and \ref{alg:R22.SVD} takes $O(t_2(d_1c)^2)$.

The complexity analysis shows that the proposed \refa{alg:rblda} is efficient when one data dimensionality is much higher than the other, i.e, $d_1\gg d_2n$ or $d_2\gg d_1n$.
% than the procedure for BLDA in \refs{sec:blda}

%When $d_2c\geq d_1$ or $d_2c\geq d_1$, the costs are the same as those for BLDA in \refs{sec:blda}. That is, $d_2c\geq d_1$, the cost is $O(d_1^2d_2n)+O(d_1^3)$ and when $d_2c\geq d_1$, the cost is $O(d_1d_2^2n)+O(d_2^3)$. However, if $d_2c<d_1$ or $d_1c<d_2$, the cost can be significantly reduced, especially when $d_2n<d_1$ or $d_1n<d_2$. When $d_2c<d_1$, calculating $\bG_{12}$ costs $O(d_1d_2na_1)+O(a_1^3)$, where $a_1=\min(d_1,d_2n)$. Computing $\bR_{12}$ and its eigen-decomposition take $O(d_1d_2^2c^2)$ and $O(d_2^3c^3)$, respectively. Thus the total cost is $O(d_1d_2na_1)+O(a_1^3)+O(d_1d_2^2c^2)+O(d_2^3c^3)$. Similarly, when $d_1c< d_2$, the total cost is $O(d_1d_2na_2)+O(a_2^3)+O(d_2d_1^2c^2)+O(d_1^3c^3)$, where $a_2=\min(d_2,d_1n)$. 

%\subsection{Efficient implementation of RBLDA)}

\section{Experiments}\label{sec:expr}%\subsection{Real data}\label{sec:real}
In this section, we perform experiments on the following five publicly available real-world MTS data sets. 
\begin{enumerate}[-]
	\item AUSLAN dataset (AUS). AUS contains 2565 MTS observations of 95 signs (i.e. 95 classes). Each sign has 27 observations, each captured from a native AUSLAN speaker using 22 sensors (i.e. 22 variables) on the CyberGlove. In our experiments, we use a subset consisting of 675 observations of 25 signs, each 27 observations. The 25 signs are alive, all, boy, building, buy, cold, come, computer, cost, crazy, danger, deaf, different, girl, glove, go, God, joke, juice, man, where, which, yes, you and zero. The time length of MTS observation we use is 47 and hence the data dimension is $47\times 22$.
	
	\item ECG dataset. ECG comprises 200 MTS observations of two classes. Each observation is collected by two electrodes (i.e. 2 variables) during one heartbeat and labeled as normal or abnormal. Abnormal heartbeats signal a cardiac pathology known as supraventricular premature beat. The normal and abnormal classes have 133 and 67 observations, respectively. The time length used is 39 and thus the observation size is $39\times 2$. 
	
	\item Japanese vowels dataset (JAP). JAP contains 640 MTS observations of nine male speakers. Each observation is the utterance of two Japanese Vowels /ae/ from a speaker recorded by 12 LPC cepstrum coefficients (i.e. 12 variables). The task is to distinguish nine male speakers by their utterances.  Speakers 1–9 have numbers of observations: 61, 65, 118, 74, 59, 54, 70, 80, 59, respectively. The time length we use is 12 and the data dimension is $7\times 12$.
	%Namely, speakers 1 has 61 observations (61 utterances of/ae/), speaker 2 has 65 observations (65 utterances of/ae/), and so on. 
	
	\item WAFER dataset (WAF). WAF has 327 observations of two classes. Each observation is labeled as normal or abnormal and recorded by six vacuum-chamber sensors (i.e. variables) during monitoring an operational semiconductor fabrication plant. The normal and abnormal classes have 200 and 127 observations, respectively. The time length we use is 104 and thus the observation size is $104\times 6$.
	
	\item BCI dataset. BCI contains a total of 416 observations, which has been further divided into a training set of 316 observations and a test set of 100 observations. BCI consists of two classes: upcoming left and right hand movements, each 208 observations. Each observation is collected using 28 EEG channels (i.e. 28 variables). The time length is 500 and the dimension is $500\times 28$.
\end{enumerate}
\reft{tbl:stats} summarizes several statistics of the five datasets used in our experiments.
\begin{table}[htbp]
	\centering
	\caption{Several statistics for the five MTS data sets. $n$: Total number of observations; $d_1\times d_2$: matrix observation size; $c$: number of classes; $p_1$,  and $p_2$: small and large training proportions. }\label{tbl:stats}
	\begin{tabular}{ccccc}
		\toprule 		
		Dataset&$d_1\times d_2\times n$&$c$&$p_1$&  $p_2$\\ \midrule
		AUSLAN & $47\times22\times 625$& 25&1/9&1/4 \\
		ECG & $39\times2\times200$ & 2&1/10&4/5 \\
		JAPAN & $7\times12\times640$ & 9&1/20&4/5 \\
		WAFER & $104\times6\times327$ & 2&1/4&4/5 \\
		BCI & $500\times28\times316$ & 2&1/16&1 \\
		\bottomrule
	\end{tabular}
\end{table}
%\begin{table}[htbp]
%	\centering
%	\caption{Several statistics for the five MTS data sets. $n$: Total number of observations; $d_1\times d_2$: observation size; $c$: number of classes; $p_1$, $p_2$ and $p_3$: small, medium and large training proportions. }\label{tbl:stats}
%	\begin{tabular}{cccccc}
	%		\toprule 		
	%		Dataset&$d_1\times d_2\times n$&$c$&$p_1$& $p_2$& $p_3$\\ \midrule
	%		AUSLAN & $47\times22\times 625$& 25&1/9&1/6&1/4 \\
	%		ECG & $39\times2\times200$ & 2&1/10&1/4&4/5 \\
	%		JAPAN & $7\times12\times640$ & 9&1/20&1/2&4/5 \\
	%		WAFER & $104\times6\times327$ & 2&1/17&1/4&4/5 \\
	%		BCI & $500\times28\times416$ & 2&1/16&1/4&1 \\
	%		\bottomrule
	%	\end{tabular}
%\end{table}

\subsection{Classification performance on real MTS datasets}
In this subsection, we compare the classification performance of RBLDA and related competitors including BLDA \cite{Noushath-2dlda}, PBLDA \cite{zhao2021-pblda}, RLDA \cite{friedman-rda}, BPCA \cite{zhangdq-2dpca} and DTW \citep{ruiz2020great}. For BLDA, we use the implementation in \cite{inoue-2dlda} as detailed in \refs{sec:blda}. For RBLDA, we use \refa{alg:rblda} and choose the two regularization parameters $r_1,r_2$ from the set $\{10^{-6}, 0.001, 0.01, 0.1,0.2,\dots,0.9,0.99\}$. For RLDA, we use a model selection algorithm similar to \refa{alg:rblda} and choose the regularization parameter $r$ from the same set as RBLDA. Let $\bV_w=(\bv_1,\bv_2,\dots,\bv_q)$ be the discriminant transformation obtained by RLDA with \refa{alg:rlda}. We also examine the performance of using  $\bV_w^u=(\bv_1^u,\bv_2^u,\dots,\bv_q^u)$, where $\bv_i^u=\bv_i/||\bv_i||_2$ (i.e., the length of $\bv_i^u$ equals to 1). For RLDA, we report the better result between $\bV_w$ and $\bV_w^u$. For BLDA, PBLDA and RBLDA, we also report similar results.

To measure the misclassification rate, each dataset is randomly split into a training set containing a ratio of $p_i,i=1,2$ of all samples per class and a test set containing the remaining samples. Several values of $p_i$ for each data set are investigated, which is listed in \reft{tbl:stats}. The small $p_1$ and large $p_2$ are respectively used to investigate the performance in small and large training sample size cases. For all methods, the 1-nearest-neighbor classifier is run in the reduced-dimensional space to obtain the test misclassification rates. All possible dimensionalities of the reduced representation are tried and the lowest average misclassification rate from 10 random splittings, its corresponding latent dimension, and standard deviation are reported. The regularization parameters involved in RLDA and RBLDA on each splitting are learned by 5-fold cross validation. For BCI, we always use the standard test set consisting of 100 observations to compute test misclassification rates. 

The classification results on the five MTS datasets are shown in \reft{tab:mis.mts}, where the `\textendash\textendash' sign means that the method fails to run, and hence the misclassification rate is not available. The detailed results with different dimensionalities in low-dimensional space are visualized in \reff{fig:err}. For RLDA, we plot the misclassification rates versus the dimensionality of $\by=\bV_w'\bx$. For the $\bY=\bV_1'\bX\bV_2$ in bilinear methods, we plot the misclassification rates versus the dimensionality of the vectorized $\vc(\bY)$. Since the size of $\bY$ on AUS or BCI is a little large, only an upper-left submatrix of $\bY$ that includes the optimal result is shown: $15\times 22$ for AUS, $16\times 25$ for BCI. For better comparison, we also add the result of DTW in \reff{fig:err} although it actually uses all the original data dimensions.

Denote the difference in misclassification rates between RBLDA and a \textit{method} by $\delta=e^{RPBLDA}-e^{method}$. To examine whether the difference $\delta$ is statistically significant, we perform the Wilcoxon signed-rank test at two levels: \textsl{Test1} and \textsl{Test2}. The null and alternative hypotheses are $H_0: \delta=0$ vs $H_1: \delta<0$\footnote{If the test is not rejected, a second test $H_0: \delta=0$ vs $H_1: \delta>0$ will be performed. If the second test is not yet rejected, it is concluded that the difference between two methods are not significant.}. \textsl{Test1} is used to test the significance at the level of each individual ($p_i$, $dataset_j$), $i=1,2,j=1,2,\dots,5$, using the 10 misclassification rates of a \textit{method} corresponding to the optimal dimension from 10 random splittings, i.e., $e_l^{method}, l=1,\dots, 10$. The results for \textsl{Test1} are also given in \reft{tab:mis.mts}. \textsl{Test2} is used to test the overall significance at the level of all ten individuals. This is performed by using the 100 misclassification rates of a \textit{method} collected from all individuals $(p_i, dataset_j)$, i.e, $e_l^{method}, i=l,\dots, 100$. For BLDA, only 80 misclassification rates are available for \textsl{Test2}. The results for \textsl{Test2} are summarized in \reft{tab:test.mts}. From \reft{tab:test.mts}, it can be seen that RBLDA is the overall winner of 10 competitions.
%To examine whether the difference between RBLDA and a \textit{method} is statistically significant, we use the Wilcoxon signed-rank test at two levels: \textsl{test1} and \textsl{test2}. Denote the difference in error rates by $\delta=e^{RPBLDA}-e^{method}$. The null and alternative hypotheses are $H_0: \delta=0$ vs $H_1: \delta<0$
\begin{table*}[th]
	\centering
	\caption{\label{tab:mis.mts} The lowest average error rates shown as (mean$\pm$std(dim.)) and their corresponding dimensionalities by different methods on the five MTS datasets. The best method is shown in bold face. $^\bullet$ means that RBLDA is significantly better than the method, and $^\circ$ means that the difference between RBLDA and the method is not significant, using the Wilcoxon signed-rank test with a $p$-value of 0.05.}
		\resizebox{\linewidth}{!}{
			\begin{tabular}{cccccccc}
				\toprule
				\multirow{2}{*}{Data} & \multirow{2}{*}{$p$} &                                                                           \multicolumn{6}{c}{Method}                                                                            \\
				\cmidrule{3-8}     &                      &              BPCA              &             BLDA              &             PBLDA             &           RLDA            &          RBLDA          &           DTW            \\ \midrule
				\multirow{2}{*}{AUS}  &        $p_1$         &  15.5$\pm$2.0(1,21)$^\bullet$  &  5.2$\pm$2.5(1,16)$^\bullet$  &  5.2$\pm$2.5(1,16)$^\bullet$  & 7.7$\pm$2.4(14)$^\bullet$ & {\bf2.9}$\pm$0.7(1,13)  &  26.2$\pm$2.2$^\bullet$  \\
				&        $p_2$         &  11.0$\pm$1.4(1,21)$^\bullet$  &   1.9$\pm$0.5(1,15)$^\circ$   &   1.9$\pm$0.5(1,15)$^\circ$   &  2.3$\pm$1.1(16)$^\circ$  & {\bf1.7}$\pm$0.6(1,14)  &  19.4$\pm$1.7$^\bullet$  \\ \midrule
				\multirow{2}{*}{ECG}  &        $p_1$         &   25.5$\pm$4.8(15,1)$^\circ$   &    \textendash\textendash     &  36.6$\pm$7.9(1,2)$^\bullet$  &  23.6$\pm$6.5(1)$^\circ$  & {\bf23.6}$\pm$5.8(2,2)  &  27.0$\pm$4.0$^\bullet$  \\
				&        $p_2$         & {\bf13.4}$\pm$5.4(8,1)$^\circ$ &   16.3$\pm$7.0(2,2)$^\circ$   &   16.3$\pm$7.0(2,2)$^\circ$   & 20.7$\pm$3.8(1)$^\bullet$ &    14.9$\pm$5.3(2,1)    &  19.0$\pm$5.1$^\bullet$  \\ \midrule
				\multirow{2}{*}{JAP}  &        $p_1$         &  22.4$\pm$3.4(2,12)$^\bullet$  &   20.2$\pm$5.5(2,8)$^\circ$   &   20.2$\pm$5.5(2,8)$^\circ$   &  18.5$\pm$5.7(8)$^\circ$  & {\bf17.4}$\pm$4.1(1,12) &  20.5$\pm$2.3$^\bullet$  \\
				&        $p_2$         &  7.2$\pm$2.8(3,11)$^\bullet$   &   4.5$\pm$1.5(2,9)$^\circ$    &   4.5$\pm$1.5(2,9)$^\circ$    &  4.9$\pm$2.2(8)$^\circ$   & {\bf3.9}$\pm$1.4(3,12)  &  9.0$\pm$2.6$^\bullet$   \\ \midrule
				\multirow{2}{*}{WAF}  &        $p_1$         &  21.2$\pm$2.9(9,6)$^\bullet$   &   7.1$\pm$2.0(2,5)$^\circ$    &   7.1$\pm$2.0(2,5)$^\circ$    & 12.7$\pm$2.3(1)$^\bullet$ &    7.7$\pm$2.6(2,3)     & {\bf7.0}$\pm$1.1$^\circ$ \\
				&        $p_2$         &  11.1$\pm$3.6(12,6)$^\bullet$  & {\bf2.9}$\pm$1.6(4,5)$^\circ$ & {\bf2.9}$\pm$1.6(4,5)$^\circ$ & 9.6$\pm$4.3(1)$^\bullet$  &    3.0$\pm$2.0(6,4)     &   3.9$\pm$2.6$^\circ$    \\ \midrule
				\multirow{2}{*}{BCI}  &        $p_1$         &  48.3$\pm$2.4(1,8)$^\bullet$   &    \textendash\textendash     & 45.1$\pm$3.6(24,7)$^\bullet$  & 49.1$\pm$5.4(1)$^\bullet$ & {\bf42.5}$\pm$6.5(3,2)  &  48.8$\pm$3.8$^\bullet$  \\
				&        $p_2$         &  42.0$\pm$0.0(4,5)$^\bullet$   & 34.0$\pm$0.0(16,25)$^\bullet$ & 34.0$\pm$0.0(16,25)$^\bullet$ & 32.0$\pm$2.5(1)$^\bullet$ & {\bf25.7}$\pm$2.8(8,1)  &  47.0$\pm$0.0$^\bullet$  \\ \bottomrule
			\end{tabular}
		}
	\end{table*}
	\begin{table}[th]
		\centering
		\caption{\label{tab:test.mts} Results of $p$-values between RBLDA and other methods by the Wilcoxon signed-rank test. The superiority of RBLDA over a method is highly significant if $p$-value$<$ 0.001.}
		\begin{tabular}{ccccc}
			\toprule
			\multicolumn{5}{c}{Method} \\ \cmidrule{1-5}
			BPCA & BLDA & PBLDA & RLDA & DTW \\
			\midrule
			3.1e-12 & 3.6e-04 & 1.1e-5 & 3.0e-9 & 4.5e-13 \\
			\bottomrule
		\end{tabular}
	\end{table}	
	\begin{figure*}[htbp]
		\centering
		%\centering \scalebox{0.7}[0.7]{\includegraphics*{}}
		\scalebox{0.7}[0.58]{\includegraphics*{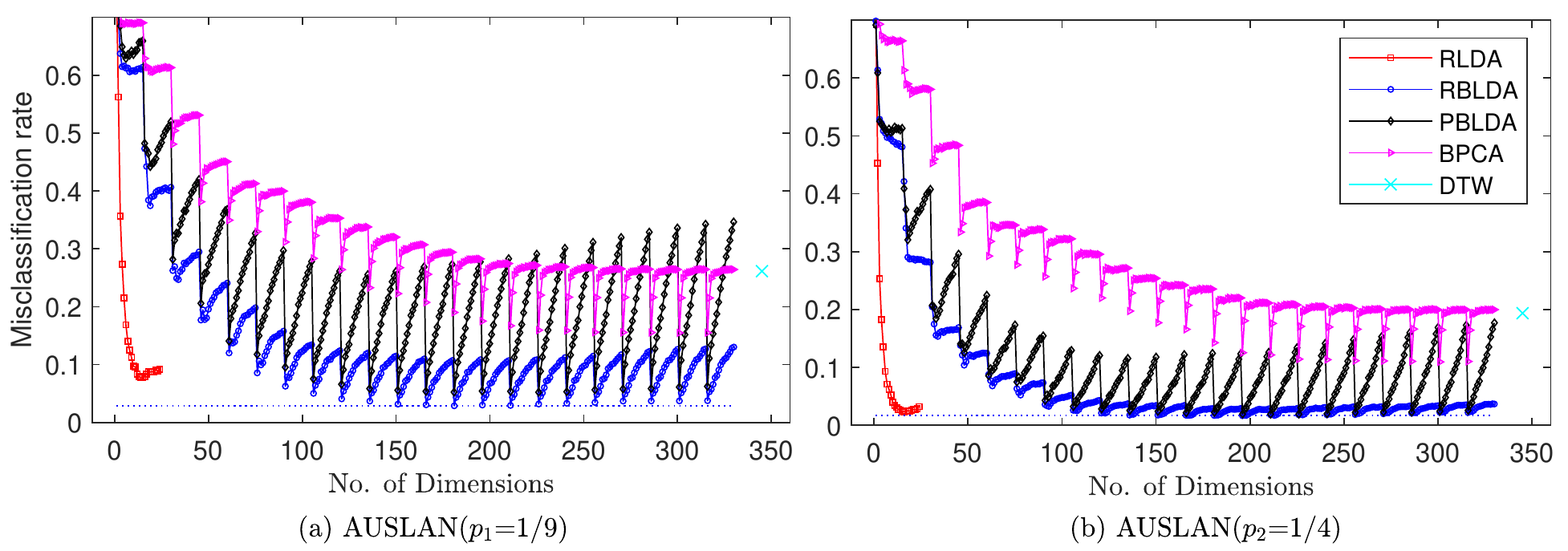}}
		\scalebox{0.7}[0.58]{\includegraphics*{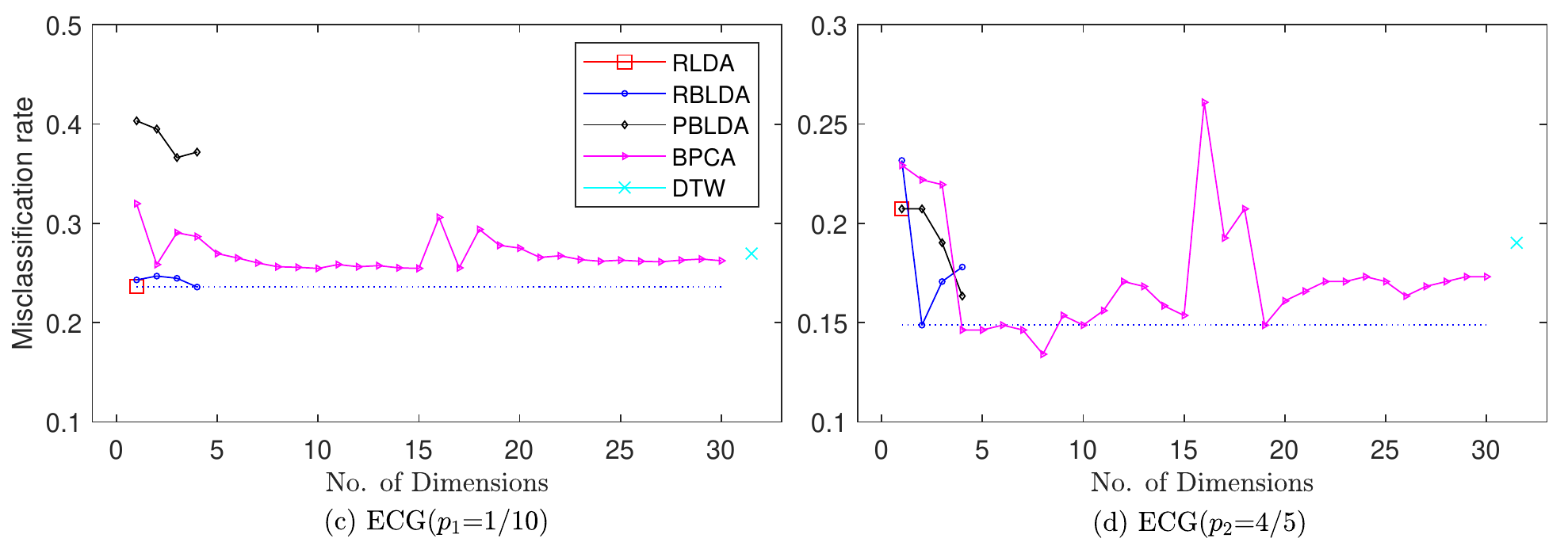}}
		\scalebox{0.7}[0.58]{\includegraphics*{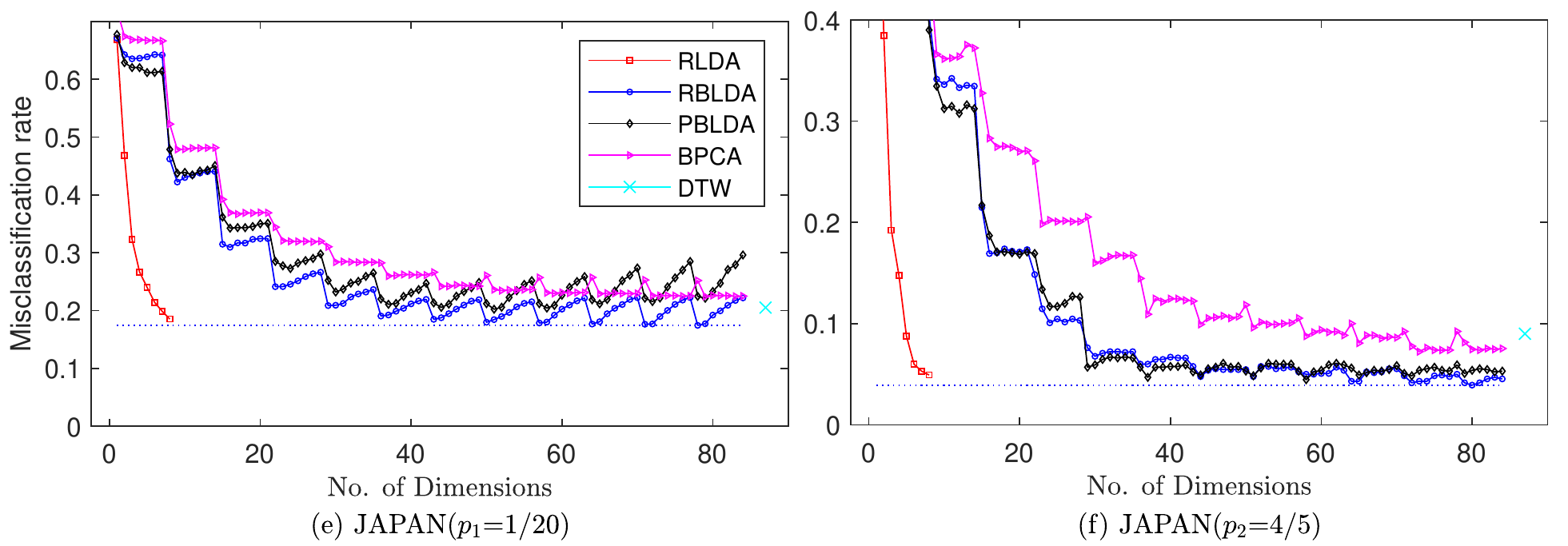}} 
		\scalebox{0.7}[0.58]{\includegraphics*{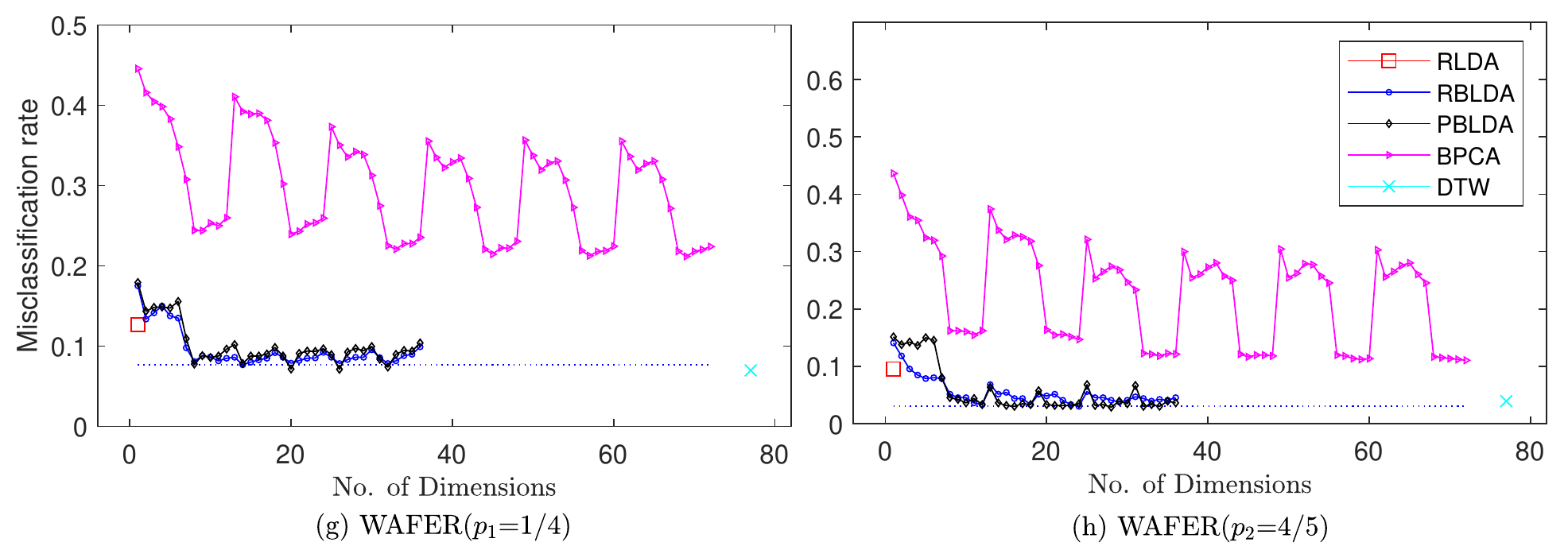}}
		\scalebox{0.7}[0.58]{\includegraphics*{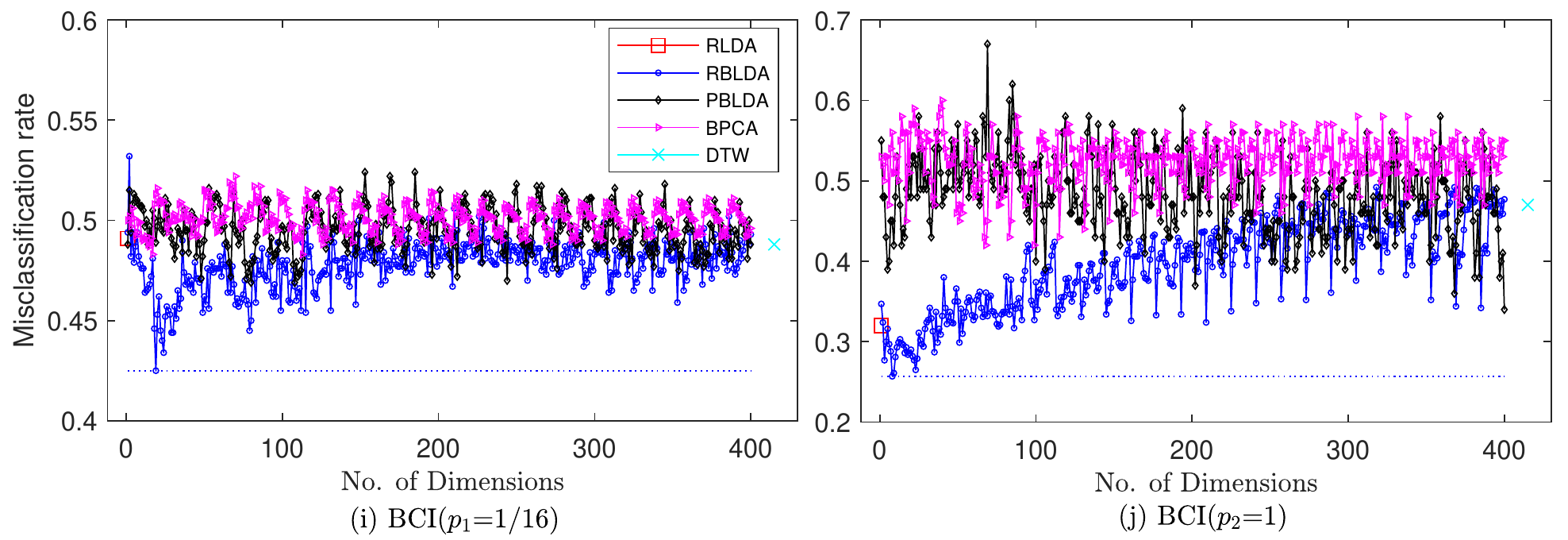}} 
		\caption{Misclassification rates versus number of dimensions by different methods in 10 competitions on the five datasets. Note that the marker x signals the result of DTW that actually uses all the original data dimensions.} \label{fig:err}
	\end{figure*}
	
	For individual comparison, the main observations from \reft{tab:mis.mts} and \reff{fig:err} include
	
	(i) RBLDA vs. RLDA. RBLDA performs better than RLDA, and 6 out of 10 competitions are significant, which means that the utilization of matrix data structure is useful for MTS data classification.
	
	(ii) RBLDA vs. BLDA and PBLDA. RBLDA is comparable with or superior to BLDA and PBLDA, and 4 out of 10 competitions are significant. RBLDA is more advantageous in small sample size cases on AUS, ECG, and BCI, where RBLDA has significantly better performance than PBLDA while BLDA even fails to run on ECG and BCI. Furthermore, it can be observed from \reff{fig:err} that, in most cases, RBLDA outperforms PBLDA not only in the optimal dimension but also in a wide range of dimensions. This indicates that the gain of applying regularization to BLDA is substantial.  
	
	(iii) RBLDA vs. BPCA. RBLDA is substantially better than BPCA except for $p_2$ on ECG data, which means that the utilization of label information is beneficial to MTS data classification in general.
	
	(iv) RBLDA vs. DTW. RBLDA significantly outperforms DTW on all five datasets except for WAFER. Their performance on WAFER are roughly comparable.
	%(i) Overall comparison. 
	%
	%(ii)  From \reft{tab:mis.mts} and \reff{fig:err},
	
	\subsection{Low-dimensional plots via RLDA and RBLDA}
	In this subsection, we compare the performance of RLDA and RBLDA for data visualization. To this end, we choose the two-class datasets ECG ($p=4/5$) and WAFER ($p=4/5$). The popularity of RLDA is partly due to the reduced-rank constraint that enables us to view informative low-dimensional projections of the data \citep{hastie2009elements}. Since $\rank(\bS_b)\leq c-1$, the dimensionality of RLDA subspace is at most $c-1$. If $d$ is much larger than $c$, the dimension reduction will be substantial. In particular, in the case that $c=2$, the dimensionality of the subspace is only 1. That is, ECG and WAFER can be viewed in and only in a one-dimensional RLDA subspace. 
	
	However, this is not the case for RBLDA. Since $\rank(\bS_{1b})\leq\min{(d_1,d_2(c-1))}$, $\rank(\bS_{2b})\leq\min$ $(d_1(c-1),d_2)$, the dimensions of RBLDA subspace is at most $2\times2$ on ECG and $6\times6$ on WAFER, respectively. This means that more discriminant features are available with RBLDA than those with RLDA. It is thus interesting to investigate whether or not the more discriminant features are useful for data visualization. For demonstration purpose, we use one result from 10 random spitting which is similar to the average one in \reft{tab:test.mts}. 
	
	In \reff{fig:low}, we plot the one-dimensional and two-dimensional projections of the remaining $1/5$ test data obtained by RLDA and RBLDA, respectively. In \reff{fig:low} (b) and (d), we choose only two discriminant features $y_{21}$ and $y_{22}$ from the $2\times2$ and $6\times6$ projected $\bY$, respectively. It can be seen from \reff{fig:low} that RBLDA yields better class separation than RLDA and more discriminant features extracted by RBLDA are beneficial to data visualization. From \reff{fig:err} (d) and (h), it can be observed that the more discriminant features available in RBLDA can improve the classification.
	\begin{figure*}[htb]
		\centering
		\scalebox{0.8}[0.6]{\includegraphics*{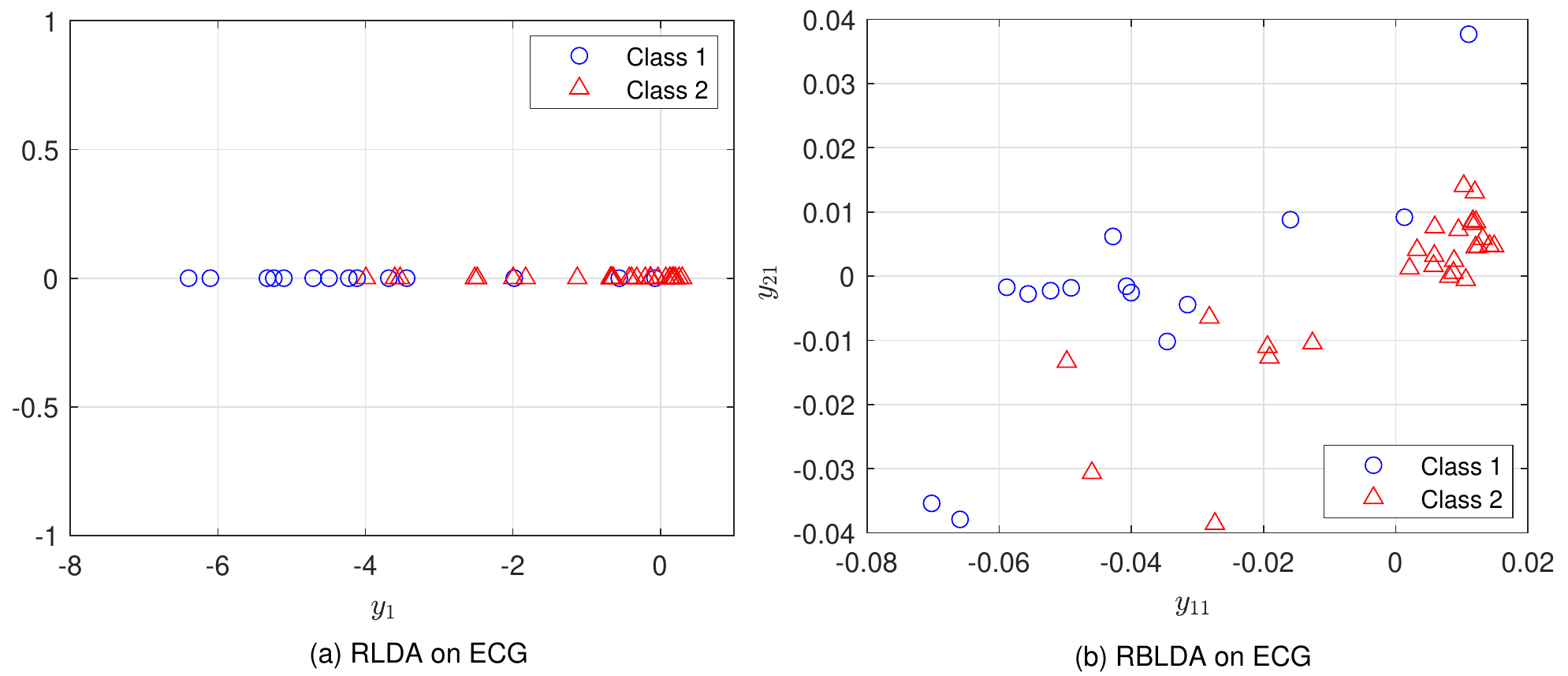}}
		\scalebox{0.8}[0.6]{\includegraphics*{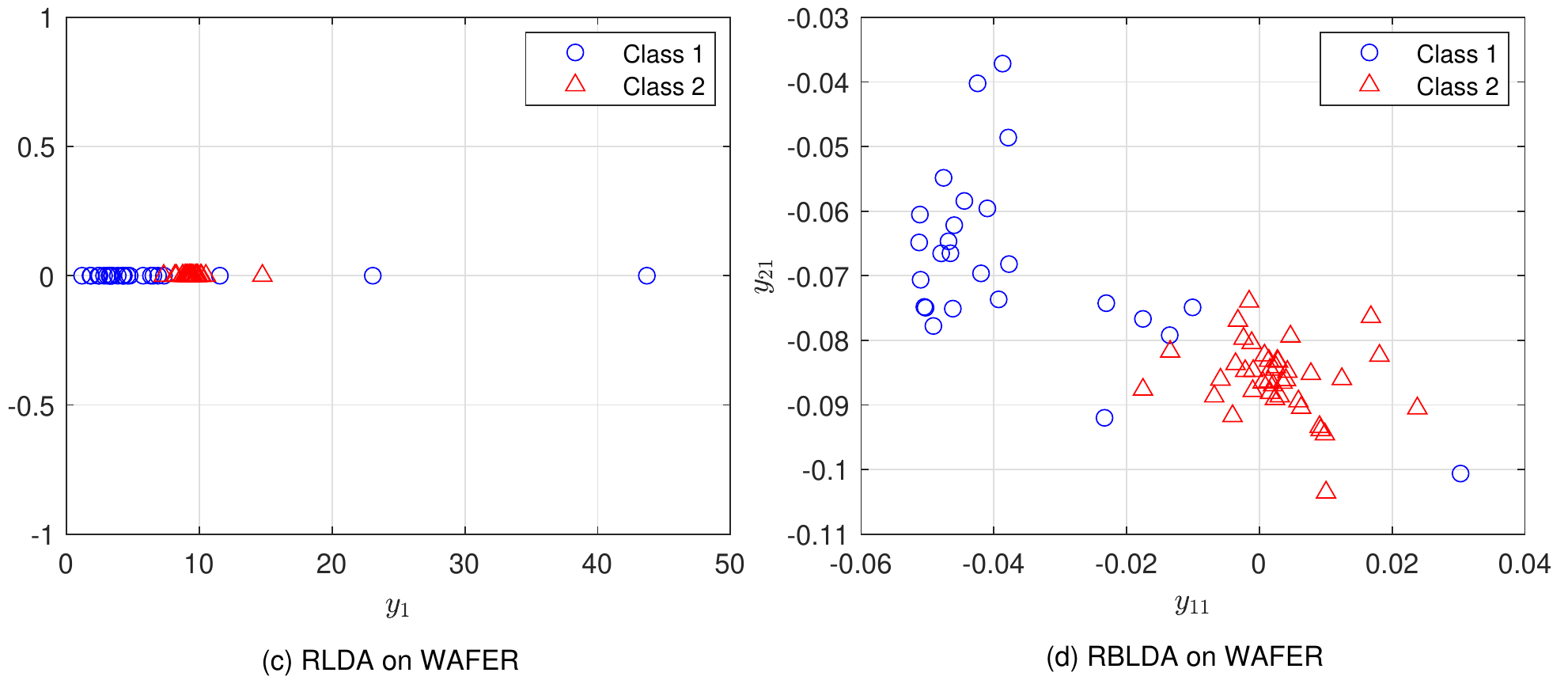}}
		\caption{} \label{fig:low}
	\end{figure*}
	\subsection{RBLDA model selection algorithm}
	In this subsection, we use BCI data to examine the efficiency of \refa{alg:rblda}. For this data, $d_1=500$ and $d_2=28$. To investigate the performance when $d_1$ is much greater than $d_2$, we construct another two datasets from BCI. We replicate the data along the row direction 8 times to form BCI2, yielding $d_1=4000$, and 32 times to form BCI3, yielding $d_1=16000$. We use the same training proportion $p=1/16$ for all the three datasets. For simplicity, we use the same number of candidates for $r_1$ and $r_2$, that is, $m=m_1=m_2$. Various values of $m$ are tried and the computational times on BCI, BCI2 and BCI3 are recorded in \reft{tab:time.bci}.
	\begin{table}[th]
		\centering
		\caption{\label{tab:time.bci} Comparison of time used by RBLDA model selection algorithm on BCI data with different dimensions.}
		\begin{tabular}{cccc}
			\toprule
			Number of 	&  \multicolumn{3}{c}{Dimension}\\ 	\cmidrule{2-4} 
			candidates $m^2$	& 500$\times$ 28 &4000$\times$ 28 &16000$\times$ 28  \\ \midrule
			1	&0.81&	1.73&	5.00 \\
			4	&0.95&	1.88& 	5.19 \\
			25	&1.23& 	2.52& 	6.27  \\
			100	&1.66& 	3.94& 	7.58  \\
			2500	&15.05& 18.45& 	27.05  \\
			10000	&57.05& 58.83& 	78.86 \\ \midrule
			$T(m,m)/T(1,1)$	&70.2& 33.9& 15.8 \\ \bottomrule
		\end{tabular}
	\end{table}
	
	It can be seen from \reft{tab:time.bci} that (i) the proposed RBLDA model selection algorithm is efficient. Although the candidate size increases by 10000 fold, the increases in running time are only 70.2 on BCI and 15.8 on BCI3; (ii) The proposed algorithm is more efficient when $d_1$ is much greater than $d_2$.
	
	\section{Conclusions and discussions}\label{sec:con}
	In this paper, we study discriminant analysis methods for MTS data classification and develop a new method called regularized bilinear discriminant analysis (RBLDA). To choose the complexity parameters involved, we further develop an efficient RBLDA model selection algorithm so that the cross validation procedure for RBLDA can be performed efficiently. The empirical results show that RBLDA significantly outperforms related competitors and the proposed model selection algorithm is efficient.   
	
	A distinct feature of MTS data is that the observations commonly have the same number of variables but may have different numbers of time points, namely the time length could be different. It would be interesting to extend our proposed RBLDA to accommodate such data, which is one of our future works.
	
	In recent years, robust extensions of 2DPCA and 2DLDA using various techniques have received much attention, including robust 2DLDA and BLDA using $L_p$-norm \citep{LI2019274,LI202173}, robust 2DLDA via Information Divergence \citep{zhang2020robust}, 2DPCA using the nuclear-norm \citep{zhang2015-n2dpca}, robust tensor LDA based on the Laplace distribution \citep{JU2021196}, 2DLDA using the nuclear-norm \citep{ZHANG201994}, trace ratio 2DLDA using $l_1$-norm \citep{LI2017216}, and etc. It would be worthwhile to extend these methods to MTS data classification in the future. 
	
	%This can be regarded as the problem of MTS data classification in the presence of missing data. It would be interesting to extend our proposed RBLDA to accommodate missing data, which is one of our future works.
%	\section*{Declaration of Competing Interest}
%	The authors declare that they have no known competing financial interests or personal relationships that could have appeared to influence the work reported in this paper.
%	\section*{Acknowledgments}
%	%The authors would like to thank the reviewers for their valuable comments and suggestions that improved this paper. 
%	This work was supported partly by the National Natural Science Foundation of China under Grant 12161089, Grant 11761076, Grant 62006206, Grant 62025602, and by the Basic Research Programs of Yunnan Province under Grant 2019FB002.  %Grant 81961138010,
	
	%the science fund of Yunnan province (2010CD070, 2011Z010) and National Natural Science Foundation of China. The work of P.L.H. Yu was supported by the Hong Kong Research Grants Councils GRF grant (HKU 706710P). The work of L. Shi was supported by the Natural Science Foundation of China (11161053).
	% and the small project fund of YNUFE (YC10D028, YCT1013)
	%
	%\bibliographystyle{IEEEtran}
	\bibliography{journals,lit,jhzhao-pub}%blda_mts
	\bibliographystyle{elsarticle-num}
\end{document}